\def\etal{{\textit{et~al.~}}}
\newtheorem{proposition}{Proposition}
\newtheorem{theorem}{Theorem}
\begin{document}
\newcolumntype{P}[1]{>{\centering\arraybackslash}p{#1}}

%
\title{On the Effectiveness of Least Squares Generative Adversarial Networks}
%
%
%
%

\author{Xudong~Mao,
        Qing~Li,~\IEEEmembership{Senior~Member,~IEEE},
        Haoran~Xie,~\IEEEmembership{Member,~IEEE},
        Raymond~Y.K.~Lau,~\IEEEmembership{Senior~Member,~IEEE},
        Zhen~Wang,
        and~Stephen~Paul~Smolley
\IEEEcompsocitemizethanks{
\IEEEcompsocthanksitem X. Mao and Q. Li are with  Department of Computer Science, City University of Hong Kong, Hong Kong.\protect\\
E-mail: xudong.xdmao@gmail.com, itqli@cityu.edu.hk
\IEEEcompsocthanksitem H. Xie is with Department of Mathematics and Information Technology, The Education University of Hong Kong, Hong Kong. \protect\\
E-mail: hrxie2@gmail.com
\IEEEcompsocthanksitem R. Lau is with Department of Information Systems, City University of Hong Kong, Hong Kong. E-mail: raylau@cityu.edu.hk
\IEEEcompsocthanksitem Z. Wang is with Center for Optical Imagery Analysis and Learning and School of Mechanical Engineering, Northwestern Polytechnical University, Xian 710072, China. E-mail: zhenwang0@gmail.com
\IEEEcompsocthanksitem S. Smolley is with CodeHatch Corp., Edmonton, Alberta Canada.\protect\\
E-mail: steve@codehatch.com
}}
\IEEEtitleabstractindextext{%
\begin{abstract}
Unsupervised learning with generative adversarial networks (GANs) has proven to be hugely successful. Regular GANs hypothesize the discriminator as a classifier with the sigmoid cross entropy loss function. However, we found that this loss function may lead to the vanishing gradients problem during the learning process. To overcome such a problem, we propose in this paper the Least Squares Generative Adversarial Networks (LSGANs) which adopt the least squares loss for both the discriminator and the generator. We show that minimizing the objective function of LSGAN yields minimizing the Pearson $\chi^2$ divergence. We also show that the derived objective function that yields minimizing the Pearson $\chi^2$ divergence performs better than the classical one of using least squares for classification. There are two benefits of LSGANs over regular GANs. First, LSGANs are able to generate higher quality images than regular GANs. Second, LSGANs perform more stably during the learning process. For evaluating the image quality, we conduct both qualitative and quantitative experiments, and the experimental results show that LSGANs can generate higher quality images than regular GANs. Furthermore, we evaluate the stability of LSGANs in two groups. One is to compare between LSGANs and regular GANs without gradient penalty. We conduct three experiments, including Gaussian mixture distribution, difficult architectures, and a newly proposed method --- datasets with small variability, to illustrate the stability of LSGANs. The other one is to compare between LSGANs with gradient penalty (LSGANs-GP) and WGANs with gradient penalty (WGANs-GP). The experimental results show that LSGANs-GP succeed in training for all the difficult architectures used in WGANs-GP, including 101-layer ResNet.
\end{abstract}

\begin{IEEEkeywords}
Least squares GANs, $\chi^2$ divergence, generative model, image generation.
\end{IEEEkeywords}}

\maketitle

\IEEEdisplaynontitleabstractindextext

%
\IEEEpeerreviewmaketitle

\IEEEraisesectionheading{\section{Introduction}\label{sec:introduction}}

%
%
%
%
\IEEEPARstart{D}{eep} learning has launched a profound reformation and even been applied to many real-world tasks, such as image classification ~\cite{He2015}, object detection ~\cite{Ren2015}, and segmentation ~\cite{Long2014}. These tasks fall into the scope of supervised learning, which means that a lot of labeled data is provided for the learning processes. Compared with supervised learning, however, unsupervised learning (such as generative models) obtains limited impact from deep learning. Although some deep generative models, e.g., RBM ~\cite{Hinton2006}, DBM ~\cite{Salakhutdinov2009}, and VAE ~\cite{Kingma2013}, have been proposed, these models all face the difficulties of intractable functions (e.g., intractable partition function) or intractable inference, which in turn restricts the effectiveness of these models. 

Unlike the above deep generative models which usually adopt approximation methods for intractable functions or inference, Generative Adversarial Networks (GANs)~\cite{Goodfellow2014} requires no approximate inference and can be trained end-to-end through a differentiable network ~\cite{Goodfellow2016}. The basic idea of GANs is to train a discriminator and a generator simultaneously: the discriminator aims to distinguish between real samples and generated samples; while the generator tries to generate fake samples as real as possible, making the discriminator believe that the fake samples are from real data. GANs have demonstrated impressive performance for various computer vision tasks such as image generation ~\cite{Nguyen2016,Chen2016}, image super-resolution ~\cite{Ledig2016}, and semi-supervised learning ~\cite{Salimans2016}.

\begin{figure*}[t]
\centering
\begin{tabular}{ccc}

 \includegraphics[width=0.3\textwidth]{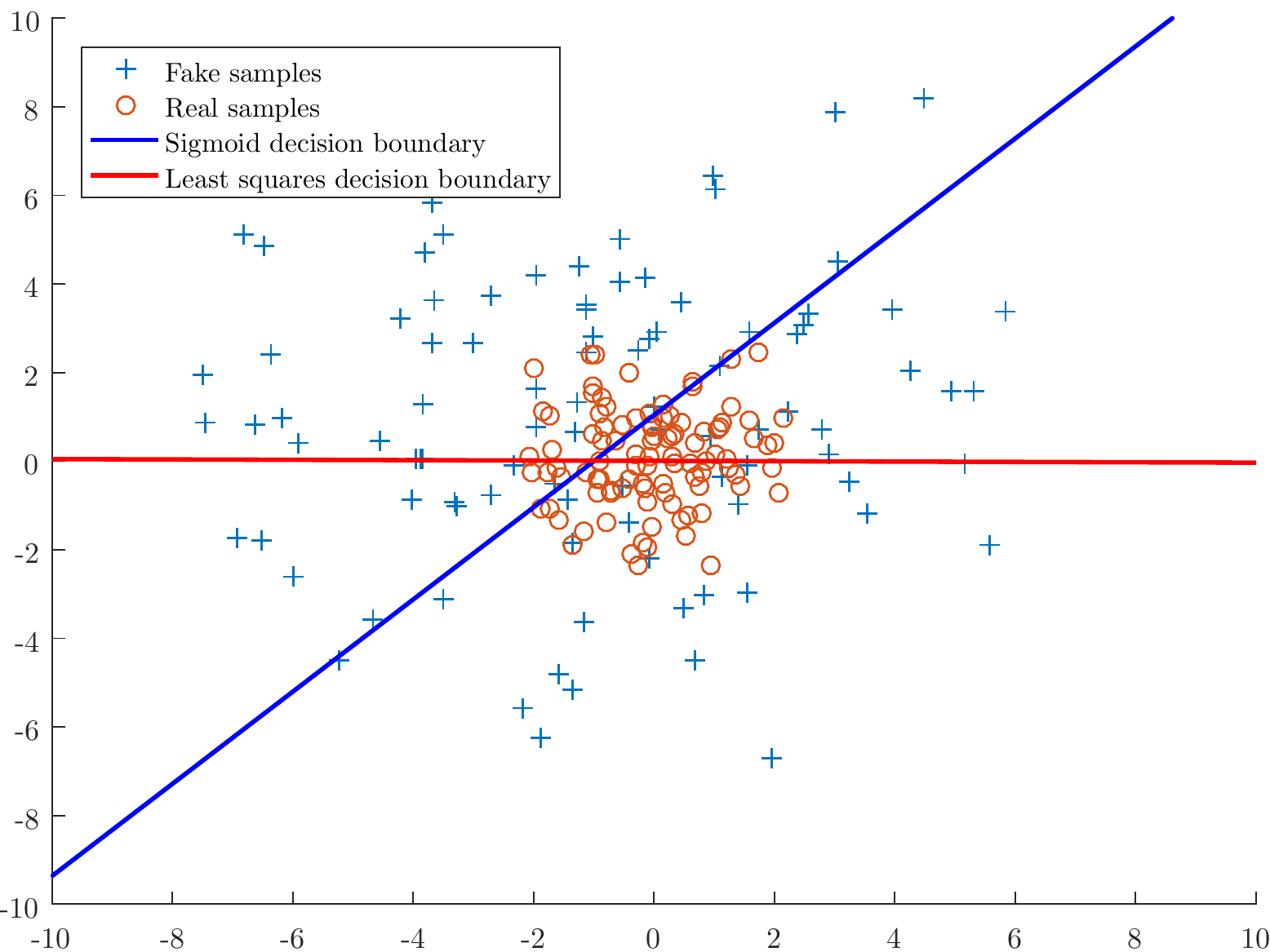}
 &
 \includegraphics[width=0.3\textwidth]{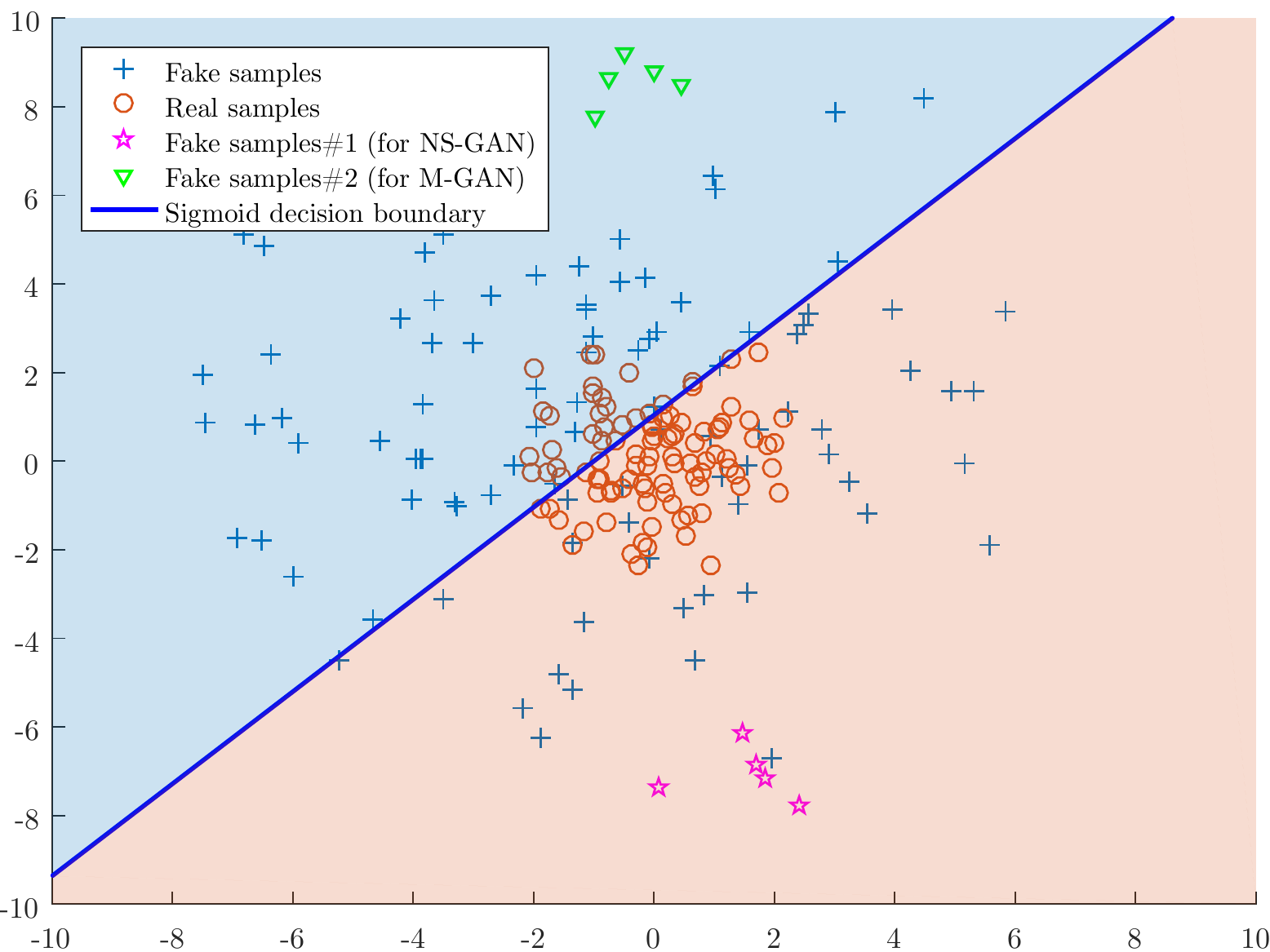}
  &
 \includegraphics[width=0.3\textwidth]{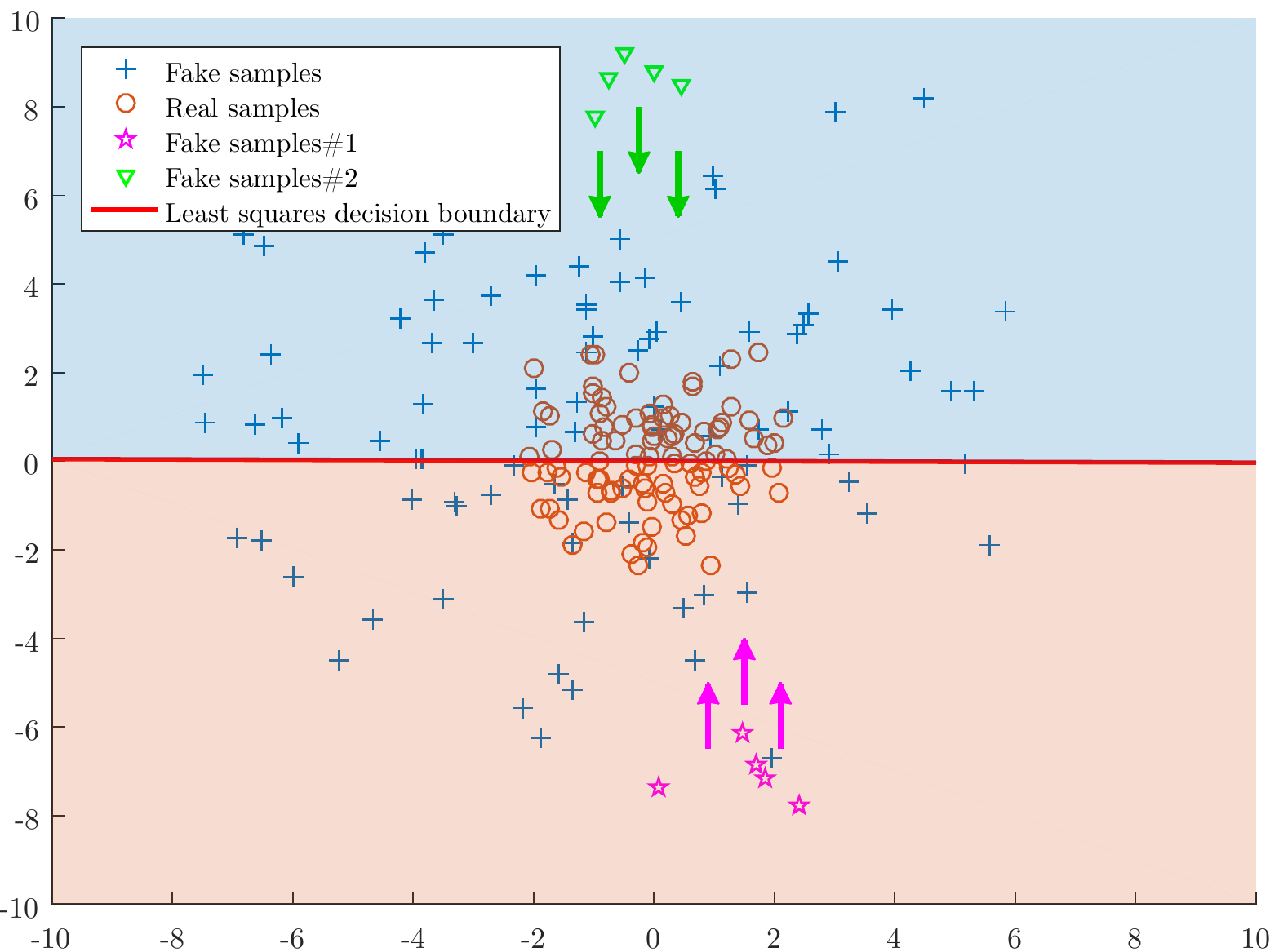}
\\
(a)
&
(b)
&
(c)
\end{tabular}
\caption{
Illustration of different behaviors of two loss functions. (a): Decision boundaries of two loss functions. Note that the decision boundary should go across real data distribution for a successful GANs learning. Otherwise, the learning process is saturated. (b): Decision boundary of the sigmoid cross entropy loss function. The orange area is the side of real samples, and the blue area is the side of fake samples. The non-saturating loss and the minimax loss will cause almost no gradient for the fake samples in magenta and green, respectively, when we use them to update the generator. (c): Decision boundary of the least squares loss function. It penalizes the fake samples (both in magenta and green), and as a result, it forces the generator to generate samples toward the decision boundary.
}
\label{fig:boundary}
\end{figure*}

\begin{figure}[t]
\centering
\begin{tabular}{cc}
 \includegraphics[width=1.5in]{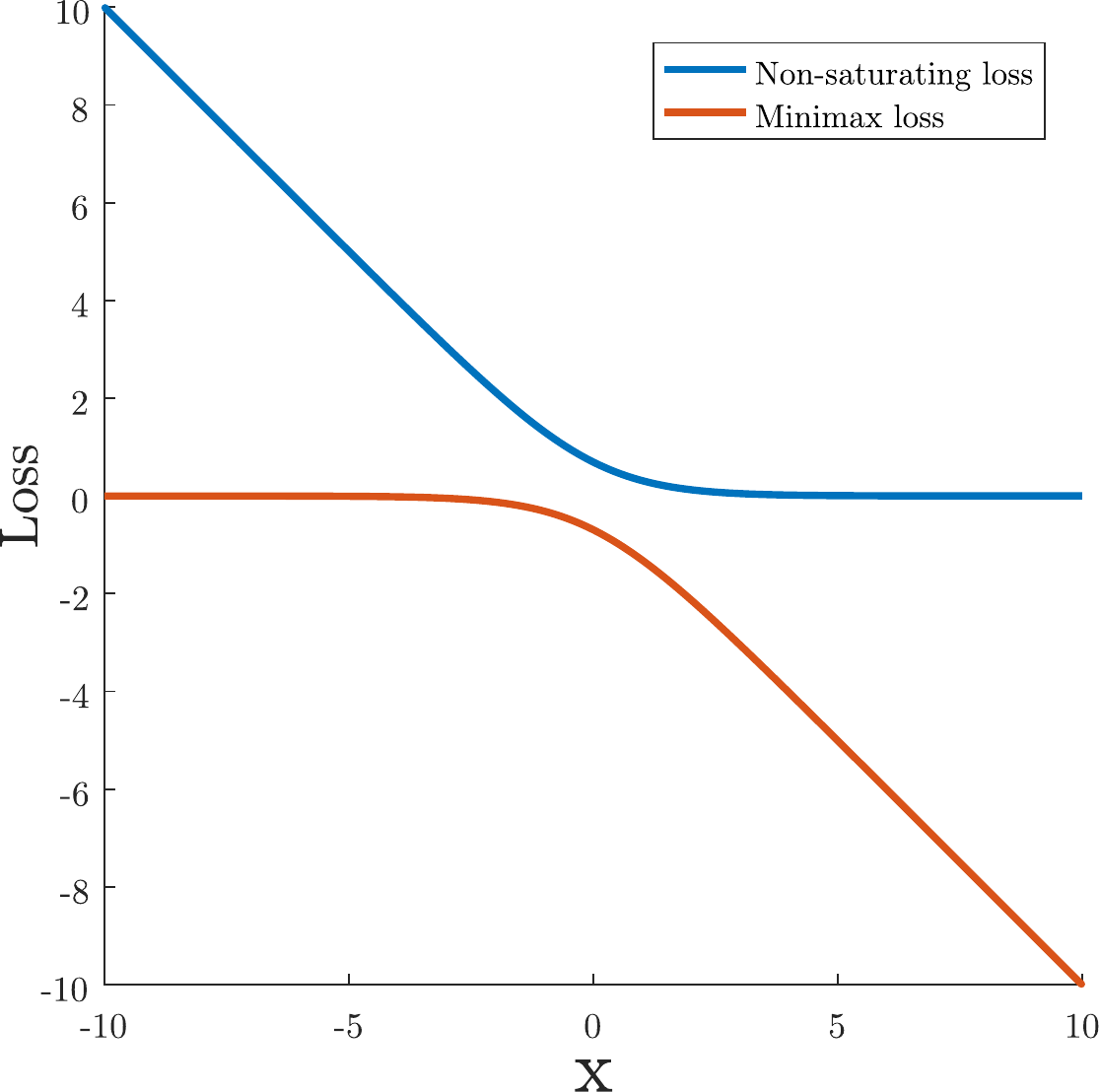}
&
 \includegraphics[width=1.5in]{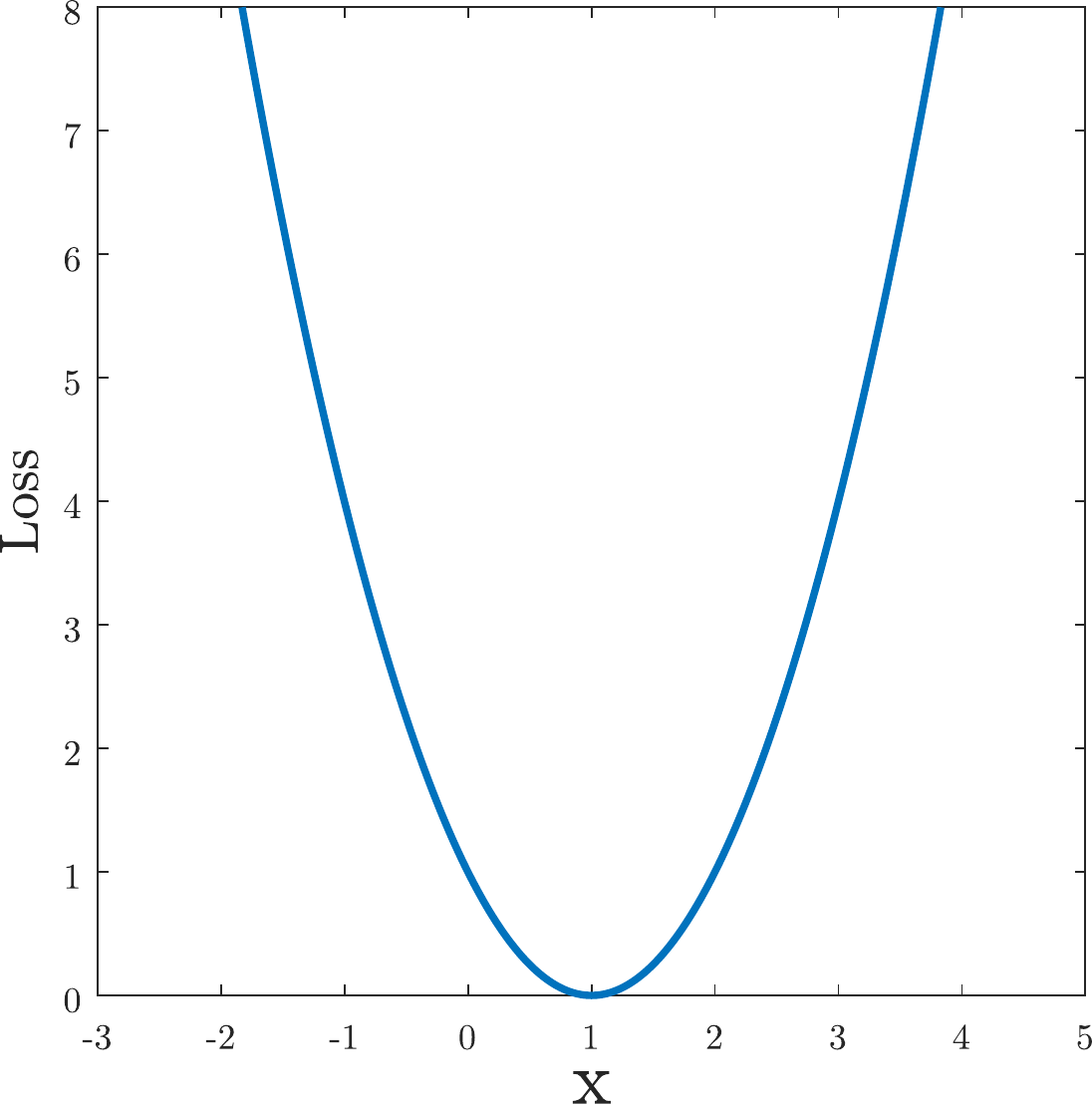}
\\
(a)
&
(b)
\end{tabular}
\caption{
(a): The non-saturating loss and the minimax loss. (b): The least squares loss.
}
\label{fig:loss}
\end{figure}

The original GAN paper ~\cite{Goodfellow2014} adopted the sigmoid cross entropy loss for the discriminator and presented two different losses for the generator: the ``minimax" loss (M-GANs) and the ``non-saturating" loss (NS-GANs). They have pointed out that M-GANs will saturate at the early stage of the learning process. Thus NS-GANs are recommended for use in practice. 

We argue that both the non-saturating loss and the minimax loss, however, will lead to the problem of vanishing gradients when updating the generator. As Fig. \ref{fig:loss}(a) shows, the non-saturating loss, i.e., \mbox{$-\log D(\cdot)$}, will saturate when the input is relatively large, while the minimax loss, i.e., \mbox{$\log (1 - D(\cdot)$)}, will saturate when the input is relatively small. Consequently, as Fig. \ref{fig:boundary}(b) shows, when updating the generator, the non-saturating loss will cause almost no gradient for the fake samples in magenta, because these samples are on the side of real data, corresponding to the input with relatively large values in Fig. \ref{fig:loss}(a). Similarly, the minimax loss will cause almost no gradient for the fake samples in green. However, these fake samples are still far from real data, and we want to pull them closer to real data. Based on this observation, we propose the Least Squares Generative Adversarial Networks (LSGANs) which adopt the least squares loss for both the discriminator and the generator. The idea is simple yet powerful: the least squares loss is able to move the fake samples toward the decision boundary, because the least squares loss penalizes samples that lie in a long way to the decision boundary even though they are on the correct side. As Fig. \ref{fig:boundary}(c) shows, the least squares loss will penalize the above two types of fake samples and pull them toward the decision boundary. Based on this property, LSGANs are able to generate samples that are closer to real data.

Another benefit of LSGANs is the improved training stability. Generally speaking, training GANs is a difficult issue in practice because of the instability of GANs learning ~\cite{Radford2015,Metz2016}. Recently, several papers have pointed out that the instability of GANs learning is partially caused by the objective function~\cite{Arjovsky2017, Metz2016,Qi2016}. Specifically, minimizing the objective function of regular GANs may cause the problem of vanishing gradients, which makes it hard to update the generator. LSGANs can alleviate this problem because penalizing samples based on the distances to the decision boundary can generate more gradients when updating the generator. Moreover, we theoretically show that the training instability of regular GANs is due to the mode-seeking behavior ~\cite{Bishop2006} of the objective function, while LSGANs exhibit less mode-seeking behavior.

In this paper, we also propose a new method for evaluating the stability of GANs. One popular evaluation method is to use difficult architectures, e.g., by excluding the batch normalization ~\cite{Arjovsky2017}. However, in practice, one will always select the stable architectures for their tasks. Sometimes the difficulty is from the datasets. Motivated by this, we propose to use difficult datasets but stable architectures to evaluate the stability of GANs. Specifically, we create two synthetic digit datasets with small variability by rendering $28\times 28$ digits using some standard fonts. Datasets with small variability are difficult for GANs to learn, since the discriminator can distinguish the real samples very easily for such datasets. 

Recently, gradient penalty has shown the effectiveness of improving the stability of GANs training ~\cite{Kodali2017,Gulrajani2017}. We find that gradient penalty is also helpful for improving the stability of LSGANs. By adding the gradient penalty in ~\cite{Kodali2017}, LSGANs are able to train successfully for all the difficult architectures used in WGANs-GP ~\cite{Gulrajani2017}. However, gradient penalty also has some inevitable disadvantages such as additional computational cost and memory cost. Based on this observation, we evaluate the stability of LSGANs in two settings: LSGANs without gradient penalty and LSGANs with gradient penalty.

Our contributions in this paper can be summarized as follows:
\begin{itemize}
\item We propose LSGANs which adopt least squares loss for both the discriminator and the generator. We show that minimizing the objective function of LSGANs yields minimizing the Pearson $\chi^2$ divergence.
\item We show that the derived objective function that yields minimizing the Pearson $\chi^2$ divergence performs better than the classical one of using least squares for classification.
\item We evaluate the image quality of LSGANs on several datasets including the LSUN-scenes and a cat dataset, and the experimental results demonstrate that LSGANs can generate higher quality images than NS-GANs. 
\item We also evaluate LSGANs on four datasets using the quantitative evaluation metric of Fr\'{e}chet inception distance (FID), and the results show that LSGANs with the derived objective function outperform NS-GANs on four datasets and outperform WGANs-GP on three datasets. Furthermore, LSGANs spend a quarter of the time comparing with WGANs-GP to reach a similar relatively optimal FID on LSUN-bedroom.
\item A new evaluation method for the training stability is proposed. We propose to use datasets with small variability to evaluate the stability of GANs. Furthermore, two synthetic digit datasets with small variability are created and published.
\item We evaluate the training stability of LSGANs without gradient penalty through three experiments including Gaussian mixture distribution, difficult architectures, and datasets with small variability. The experimental results demonstrate that LSGANs perform more stably than NS-GANs.
\item We also evaluate the training stability of LSGANs-GP through training on six difficult architectures used in WGANs-GP. LSGANs-GP succeed in training for all the six architectures including 101-layer ResNet.
\end{itemize}

This paper extends our earlier conference work ~\cite{Mao2017} in a number of ways. First, we present more theoretical analysis about the properties of LSGANs and $\chi^2$ divergence. Second, we conduct a new quantitative experiment based on the FID evaluation metric, and the results demonstrate that LSGANs perform better than NS-GANs and WGANs-GP. The results also show that the derived objective function (Eq. \eqref{eq:lsgan_peason}) that yields minimizing the Pearson $\chi^2$ divergence performs better than the classical one (Eq. \eqref{eq:lsgan_01}) of using least squares for classification.  Eq. \eqref{eq:lsgan_01} is used in our earlier conference work, but we change to use Eq. \eqref{eq:lsgan_peason} in this paper. Third, we provide new qualitative results on a cat dataset, which also shows that LSGANs generate higher quality images than NS-GANs. Fourth, we propose a new method for evaluating the training stability. In addition to using difficult architectures ~\cite{Arjovsky2017}, we propose to use difficult datasets but stable architectures to evaluate the training stability. Fifth, we present a new comparison experiment between LSGANs-GP and WGANs-GP. The results show that LSGANs-GP succeed in training for all the difficult architectures used in ~\cite{Gulrajani2017}, including 101-layer ResNet. Finally, we present a new comparison experiment between the two parameter schemes (Eq. \eqref{eq:lsgan_peason} and \eqref{eq:lsgan_01}) of LSGANs.

\section{Related Work}
\label{sec:related}

Deep generative models attempt to capture the probability distributions over the given data. Restricted Boltzmann Machines (RBMs), one type of deep generative models, are the basis of many other hierarchical models, and they have been used to model the distributions of images~\cite{Taylor2010} and documents~\cite{Hinton2009}. Deep Belief Networks (DBNs)~\cite{Hinton2006_DBN} and Deep Boltzmann Machines (DBMs)~\cite{Salakhutdinov2009} are extended from the RBMs. The most successful application of DBNs is for image classification~\cite{Hinton2006_DBN}, where DBNs are used to extract feature representations. However, RBMs, DBNs, and DBMs all have the difficulties of intractable partition functions or intractable posterior distributions, which thus use the approximation methods to learn the models. Another important deep generative model is Variational Autoencoders (VAE)~\cite{Kingma2013}, a directed model, which can be trained with gradient-based optimization methods. But VAEs are trained by maximizing the variational lower bound, which may lead to the blurry problem of generated images ~\cite{Goodfellow2016}.

Recently, Generative Adversarial Networks (GANs) have been proposed by Goodfellow \etal ~\cite{Goodfellow2014}, who explained the theory of GANs learning based on a game theoretic scenario. A similar idea is also introduced by Ganin \etal ~\cite{Ganin2016}, where a method of adversarial training is proposed for domain adaptation. Showing the powerful capability for unsupervised tasks, GANs have been applied to many specific tasks, like image super-resolution~\cite{Ledig2016}, text to image synthesis~\cite{Reed2016}, and image to image translation~\cite{Isola2016}. By combining the traditional content loss and the adversarial loss, super-resolution generative adversarial networks~\cite{Ledig2016} achieved state-of-the-art performance for the task of image super-resolution. Reed \etal ~\cite{Reed2016} proposed a model to synthesize images given text descriptions based on the conditional GANs~\cite{Mirza2014}. Isola \etal~\cite{Isola2016} also used the conditional GANs to transfer images from one representation to another. In addition to unsupervised learning tasks, GANs also show the good potential for semi-supervised learning tasks. Salimans \etal ~\cite{Salimans2016} proposed a GAN-based framework for semi-supervised learning, in which the discriminator not only outputs the probability that an input image is from real data, but also outputs the probabilities of belonging to each class. Another important problem of GANs is to inference the latent vectors from given examples ~\cite{Donahue2017,Dumoulin2017,Li2017}. Both ~\cite{Donahue2017} and ~\cite{Dumoulin2017} proposed a bidirectional adversarial learning framework by incorporating an encoder into the GANs framework. Li \etal ~\cite{Li2017} proposed to use the conditional entropy to regularize the objectives in ~\cite{Donahue2017,Dumoulin2017}, making the learning process more stable.

Despite the great successes GANs have achieved, improving the quality of generated images is still a challenge. A lot of works have been proposed to improve the quality of images for GANs. Radford \etal~\cite{Radford2015} first introduced convolutional layers to GANs architecture, and proposed a network architecture called deep convolutional generative adversarial networks (DCGANs). Denton \etal~\cite{Denton2015} proposed a framework called Laplacian pyramid of generative adversarial networks to improve the image quality of high-resolution images, where a Laplacian pyramid is constructed to generate high-resolution images starting from low-resolution images. A similar approach is proposed by Huang \etal ~\cite{Huang2016} who used a series of stacked GANs to generate images from abstract to specific. Salimans \etal ~\cite{Salimans2016} proposed a technique called feature matching to get better convergence. The idea is to make the generated samples match the statistics of real data by minimizing the mean square error on an intermediate layer of the discriminator.

Another critical issue for GANs is the stability of the learning process. Many works have been proposed to address this problem by analyzing the objective functions of GANs ~\cite{Arjovsky2017,Che2016,Metz2016,Nowozin2016,Qi2016}. Viewing the discriminator as an energy function, Zhao \etal ~\cite{Zhao2016} used an auto-encoder architecture to improve the stability of GANs learning. Dai \etal ~\cite{Dai2017} extended the energy-based GANs by adding some regularizations to make the discriminator non-degenerate. To make the generator and the discriminator more balanced, Metz \etal~\cite{Metz2016} created an unrolled objective function to enhance the generator. Che \etal~\cite{Che2016} incorporated a reconstruction module and used the distance between real samples and reconstructed samples as a regularizer to get more stable gradients. Nowozin \etal~\cite{Nowozin2016} pointed out that the objective of regular GAN~\cite{Goodfellow2014} which is related to Jensen-Shannon divergence is a special case of divergence estimation, and generalized it to arbitrary f-divergences~\cite{Nguyen2010}. Arjovsky \etal~\cite{Arjovsky2017} extended this by analyzing the properties of four different divergences and concluded that Wasserstein distance is more stable than Jensen-Shannon divergence. Qi ~\cite{Qi2016} proposed the loss-sensitive GAN whose loss function is based on the assumption that real samples should have smaller losses than fake samples. They also introduced to use Lipschitz regularity to stabilize the learning process. Base on the above assumptions, they proved that loss-sensitive GAN has non-vanishing gradient almost everywhere. Some other techniques to stabilize GANs learning include the second order method ~\cite{Mescheder2017} and gradient penalty ~\cite{Gulrajani2017,Kodali2017,Roth2017}. Mescheder \etal ~\cite{Mescheder2017} analyzed the convergence property of GANs from the perspective of the eigenvalues of the equilibrium and proposed a method to regularize the eigenvalues, which in turn leads to better training stability. Gulrajani \etal ~\cite{Gulrajani2017} used gradient penalty to enforce the Lipschitz constraint in Wasserstein distance. They showed that this approach performs more stably than the method used in ~\cite{Arjovsky2017}. Unlike ~\cite{Gulrajani2017} that applies gradient penalty around the region between the real data and the fake data, Kodali \etal ~\cite{Kodali2017} proposed to apply gradient penalty around the real data manifold only, which has the advantage that it is applicable to various GANs. Roth \etal ~\cite{Roth2017} derived a new gradient-based regularization from analyzing that adding noise to the discriminator yields training with gradient penalty.

\section{Method}
\label{sec:method}

\subsection{Generative Adversarial Networks} 
The learning process of GANs is to train a discriminator $D$ and a generator $G$ simultaneously. The target of $G$ is to learn the distribution $p_g$ over data $\bm{x}$. $G$ starts with sampling input variables $\bm{z}$ from a uniform or Gaussian distribution $p_z(\bm{z})$, then maps the input variables $\bm{z}$ to data space $G(\bm{z}; \theta_g)$ through a differentiable network. On the other hand, $D$ is a classifier $D(\bm{x}; \theta_d)$ that aims to recognize whether an image is from training data or from $G$. The minimax objective for GANs can be formulated as follows:

\begin{equation}
\label{eq:gan}
\begin{split}
\min_G \max_D V_{\text{\tiny GAN}}(D, G) = \mathbb{E}_{\bm{x} \sim p_{\text{data}}(\bm{x})}[\log D(\bm{x})]& \\
+ \mathbb{E}_{\bm{z} \sim p_{\bm{z}}(\bm{z})}[\log (1 - D(G(\bm{z})))]&.
\end{split}
\end{equation}
In practice, Goodfellow \etal ~\cite{Goodfellow2014} recommend implementing the following non-saturating loss for the generator, which provides much stronger gradients.

\begin{equation}
\label{eq:nsgan}
\begin{split}
\min_G V_{\text{\tiny GAN}}(G) = -\mathbb{E}_{\bm{z} \sim p_{\bm{z}}(\bm{z})}[\log (D(G(\bm{z})))].
\end{split}
\end{equation}
Following ~\cite{Fedus2018}, we refer to Eq. \eqref{eq:gan} as minimax GANs (M-GANs) and Eq. \eqref{eq:nsgan} as non-saturating GANs (NS-GANs). In the following experiments, we compare our proposed LSGANs with NS-GANs since NS-GANs perform much better than M-GANs ~\cite{Goodfellow2014,Fedus2018}. 

\subsection{Least Squares Generative Adversarial Networks}
\label{sec:lsgan}
As stated in Section \ref{sec:introduction}, the original GAN paper ~\cite{Goodfellow2014} adopted the sigmoid cross entropy loss function for the discriminator, and introduced the minimax loss and the non-saturating loss for the generator. However, both the minimax loss and the non-saturating loss will cause the problem of vanishing gradients for some fake samples that are far from real data, as shown in Fig. \ref{fig:boundary}(b). To remedy this problem, we propose the Least Squares Generative Adversarial Networks (LSGANs). Suppose we use the $a$-$b$ coding scheme for the discriminator, where $a$ and $b$ are the labels for the fake data and the real data, respectively. Then the objective functions for LSGANs can be defined as follows:

\begin{equation}
\label{eq:lsgan}
\begin{split}
\min_D V_{\text{\tiny LSGAN}}(D) = &\frac{1}{2}\mathbb{E}_{\bm{x} \sim p_{\text{data}}(\bm{x})}\bigl[(D(\bm{x})-b)^2\bigr] \\
+ &\frac{1}{2}\mathbb{E}_{\bm{z} \sim p_{\bm{z}}(\bm{z})}\bigl[(D(G(\bm{z}))-a)^2\bigr] \\
\min_G V_{\text{\tiny LSGAN}}(G) = &\frac{1}{2}\mathbb{E}_{\bm{z} \sim p_{\bm{z}}(\bm{z})}\bigl[(D(G(\bm{z}))-c)^2\bigr],
\end{split}
\end{equation}
where $c$ denotes the value that $G$ wants $D$ to believe for the fake data.

\subsubsection{Benefits of LSGANs}\label{sec:benefits}
The benefits of LSGANs can be derived from two aspects. First, unlike M-GANs and NS-GANs which cause almost no gradient for some kinds of fake samples, LSGANs will penalize those samples even though they are correctly classified, as shown in Fig. \ref{fig:boundary}(c). When we update the generator, the parameters of the discriminator are fixed, i.e., the decision boundary is fixed. As a result, the penalization will cause the generator to generate samples toward the decision boundary. On the other hand, the decision boundary should go across the manifold of real data for a successful GANs learning; otherwise, the learning process will be saturated. Thus moving the generated samples toward the decision boundary leads to making them closer to the manifold of real data.

Second, penalizing the samples lying in a long way to the decision boundary can generate more gradients when updating the generator, which in turn relieves the problem of vanishing gradients. This allows LSGANs to perform more stably during the learning process. This benefit can also be derived from another perspective: as shown in Fig. \ref{fig:loss}, the least squares loss function is flat only at one point, while NS-GANs will saturate when $x$ is relatively large, and M-GANs will saturate when $x$ is relatively small. In Section \ref{sec:benefit_chi}, we provide further theoretical analysis about the stability of LSGANs.

\subsection{Theoretical Analysis}
\subsubsection{Relation to $\chi^2$ Divergence}
In the original GAN paper~\cite{Goodfellow2014}, the authors have shown that minimizing Eq. \eqref{eq:gan} yields minimizing the Jensen-Shannon divergence:
\footnotesize
\begin{equation}
\label{eq:gan_js}
\begin{split}
C(G) &=\text{KL} \left(p_\text{data} \left \| \frac{p_\text{data} + p_g}{2} \right. \right) + \text{KL} \left(p_g \left \| \frac{p_\text{data} + p_g}{2} \right. \right)-\log(4). 
\end{split}
\end{equation}
\normalsize

Here we also explore the relation between LSGANs and f-divergence. Consider the following extension of Eq. \eqref{eq:lsgan}:
\begin{equation}
\label{eq:general_lsgan}
\begin{split}
\min_D V_{\text{\tiny LSGAN}}(D) = &\frac{1}{2}\mathbb{E}_{\bm{x} \sim p_{\text{data}}(\bm{x})}\bigl[(D(\bm{x})-b)^2\bigr] \\
+ &\frac{1}{2}\mathbb{E}_{\bm{z} \sim p_{\bm{z}}(\bm{z})}\bigl[(D(G(\bm{z}))-a)^2\bigr] \\
\min_G V_{\text{\tiny LSGAN}}(G) = &\frac{1}{2}\mathbb{E}_{\bm{x} \sim p_{\text{data}}(\bm{x})}\bigl[(D(\bm{x})-c)^2\bigr] \\
+ &\frac{1}{2}\mathbb{E}_{\bm{z} \sim p_{\bm{z}}(\bm{z})}\bigl[(D(G(\bm{z}))-c)^2\bigr].
\end{split}
\end{equation}

\begin{figure*}[t]
\centering
\begin{tabular}{c}
 \includegraphics[width=0.9\textwidth]{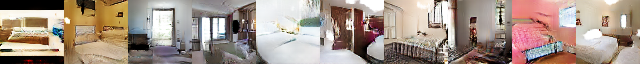}
\\
(a) Generated images ($64 \times 64$) by NS-GANs (reported in ~\cite{Radford2015}).
\\
 \includegraphics[width=0.9\textwidth]{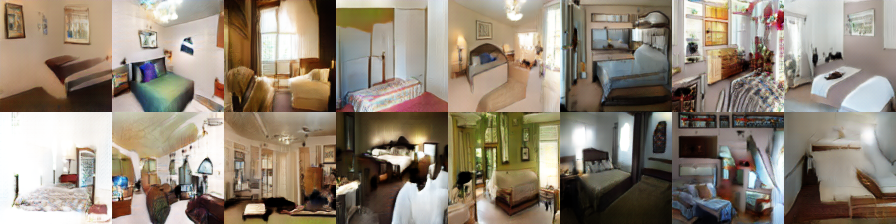}
\\
(b) Generated images ($112 \times 112$) by NS-GANs.
\\
 \includegraphics[width=0.9\textwidth]{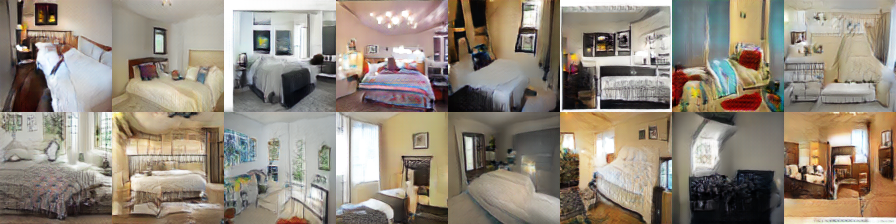}
\\
(c) Generated images ($112 \times 112$) by LSGANs.

\end{tabular}
\caption{
Generated images on LSUN-bedroom.
}
\label{fig:bedroom_cmp}
\end{figure*}

Note that adding the term $\mathbb{E}_{\bm{x} \sim p_{\text{data}}(\bm{x})}[(D(\bm{x})-c)^2]$ to $V_{\text{\tiny LSGAN}}(G)$ causes no change of the optimal values since this term does not contain parameters of $G$.

We first derive the optimal discriminator $D$ for a fixed $G$.
\begin{proposition}
\label{pro:optimal_d}
For a fixed $G$, the optimal discriminator $D$ is
\begin{equation}
\label{eq:optimal_d}
D^*(\bm{x}) = \frac{bp_\text{data}(\bm{x})+ap_g(\bm{x})}{p_\text{data}(\bm{x})+p_g(\bm{x})}.
\end{equation}
\end{proposition}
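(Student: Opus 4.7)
The plan is to mimic the argument used for the optimal discriminator in the original GAN paper (Goodfellow et al.~\cite{Goodfellow2014}), adapted to the quadratic objective rather than the logarithmic one. The idea is to rewrite $V_{\text{\tiny LSGAN}}(D)$ as a single integral over $\bm{x}$ and then perform pointwise minimization of the integrand as a function of the scalar $D(\bm{x})$.

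First, I would apply the law of the unconscious statistician to the $\bm{z}$-expectation: since $G$ is fixed and $G(\bm{z})$ with $\bm{z}\sim p_{\bm{z}}$ has density $p_g$ on the data space, one can write
\begin{equation*}
\mathbb{E}_{\bm{z}\sim p_{\bm{z}}}\bigl[(D(G(\bm{z}))-a)^2\bigr] = \int_{\mathcal{X}} p_g(\bm{x})(D(\bm{x})-a)^2\,d\bm{x}.
\end{equation*}
Combining this with the real-data term gives
\begin{equation*}
V_{\text{\tiny LSGAN}}(D) = \frac{1}{2}\int_{\mathcal{X}} \Bigl[ p_\text{data}(\bm{x})(D(\bm{x})-b)^2 + p_g(\bm{x})(D(\bm{x})-a)^2 \Bigr] d\bm{x}.
\end{equation*}

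Next, I would argue that because $D$ is unconstrained (the proposition asks for the optimum over all measurable functions, as is standard in this setting), minimization can be done pointwise: for each $\bm{x}$ it suffices to minimize the integrand, which is a nonnegative quadratic in the scalar variable $y = D(\bm{x})$, namely $f_{\bm{x}}(y) = p_\text{data}(\bm{x})(y-b)^2 + p_g(\bm{x})(y-a)^2$. Setting $f'_{\bm{x}}(y) = 2p_\text{data}(\bm{x})(y-b) + 2p_g(\bm{x})(y-a) = 0$ and solving gives
\begin{equation*}
y^* = \frac{bp_\text{data}(\bm{x})+ap_g(\bm{x})}{p_\text{data}(\bm{x})+p_g(\bm{x})},
\end{equation*}
which is the claimed $D^*(\bm{x})$. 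Since $f''_{\bm{x}}(y) = 2(p_\text{data}(\bm{x})+p_g(\bm{x})) \geq 0$, this critical point is indeed the global minimum; on the support where $p_\text{data}(\bm{x})+p_g(\bm{x}) = 0$ the value of $D(\bm{x})$ is immaterial and can be set arbitrarily.

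Essentially the only subtlety, and arguably the main thing to be careful about, is the justification for exchanging minimization with integration (pointwise optimization) and for handling the zero-denominator set; both are standard for this class of results but should be mentioned briefly. The rest is a routine quadratic minimization, so no real obstacle is expected.
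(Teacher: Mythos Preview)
Your proposal is correct and follows essentially the same approach as the paper: rewrite $V_{\text{\tiny LSGAN}}(D)$ as a single integral over $\mathcal{X}$ and then minimize the quadratic integrand pointwise in $D(\bm{x})$. You actually supply a bit more detail than the paper (the first-order condition, the second-derivative check, and the comment on the zero-measure set), but the argument is the same.
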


\begin{proof}

Given any generator $G$, we try to minimize $V(D)$ with respect to the discriminator $D$:
\small
\begin{equation}
\label{eq:optimal_proof}
\begin{split}
V(D) = &\frac{1}{2}\mathbb{E}_{\bm{x} \sim p_{\text{data}}}\bigl[(D(\bm{x})-b)^2\bigr] + \frac{1}{2}\mathbb{E}_{\bm{z} \sim p_{\bm{z}}}\bigl[(D(G(\bm{z}))-a)^2\bigr] \\
= &\frac{1}{2}\mathbb{E}_{\bm{x} \sim p_{\text{data}}}\bigl[(D(\bm{x})-b)^2\bigr] + \frac{1}{2}\mathbb{E}_{\bm{x} \sim p_{g}}\bigl[(D(\bm{x})-a)^2\bigr] \\
=&\int_{\mathcal{X}} \frac{1}{2} \bigl( p_{\text{data}}(\bm{x})(D(\bm{x})-b)^2 + p_{g}(\bm{x})(D(\bm{x})-a)^2 \bigr) \textrm{d}x.
\end{split}
\end{equation}
\normalsize 
Consider the internal function:
\begin{equation}
\frac{1}{2} \bigl( p_{\text{data}}(\bm{x})(D(\bm{x})-b)^2 + p_{g}(\bm{x})(D(\bm{x})-a)^2 \bigr).
\end{equation}
It achieves the mimimum at $\frac{bp_\text{data}(\bm{x})+ap_g(\bm{x})}{p_\text{data}(\bm{x})+p_g(\bm{x})}$ with respect to $D(\bm{x})$, concluding the proof.

\end{proof}

In the following equations we use $p_\text{d}$ to denote $p_\text{data}$ for simplicity. 

\begin{theorem}
Optimizing LSGANs yields minimizing the Pearson $\chi^2$ divergence between $p_\text{d}+p_g$ and $2p_g$, if a, b, and c satisfy the conditions of $b-c=1$ and $b-a=2$ in Eq. \eqref{eq:general_lsgan}. 
\end{theorem}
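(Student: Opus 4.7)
The plan is to mirror the Jensen--Shannon derivation of Eq.~\eqref{eq:gan_js}: first substitute the optimal discriminator $D^*$ from Proposition~\ref{pro:optimal_d} into the extended generator loss $V_{\text{\tiny LSGAN}}(G)$ of Eq.~\eqref{eq:general_lsgan}, and then algebraically reshape the single resulting integral until its integrand is manifestly of the Pearson $\chi^2$ form. The reason the extended objective (rather than the original loss of Eq.~\eqref{eq:lsgan}) is used is exactly the one noted in the paragraph below Eq.~\eqref{eq:general_lsgan}: adding $\mathbb{E}_{\bm{x}\sim p_\text{d}}[(D(\bm{x})-c)^2]$ does not change the minimizer over $G$, but it does permit both expectations to be folded into a single integral against the combined density $p_\text{d}+p_g$, which is precisely the denominator that needs to appear in the target $\chi^2$ expression.

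Concretely, I would begin by writing
\begin{equation*}
2\,V_{\text{\tiny LSGAN}}(G) \;=\; \int_{\mathcal{X}}\bigl(p_\text{d}(\bm{x}) + p_g(\bm{x})\bigr)\bigl(D^*(\bm{x})-c\bigr)^2\,\textrm{d}\bm{x},
\end{equation*}
and then, using $D^*(\bm{x}) = (b\,p_\text{d} + a\,p_g)/(p_\text{d}+p_g)$ from Proposition~\ref{pro:optimal_d}, express
\begin{equation*}
D^*(\bm{x}) - c \;=\; \frac{(b-c)\,p_\text{d}(\bm{x}) + (a-c)\,p_g(\bm{x})}{p_\text{d}(\bm{x}) + p_g(\bm{x})}.
\end{equation*}
Squaring this and cancelling one factor of $p_\text{d}+p_g$ against the outer weight turns the objective into
\begin{equation*}
2\,V_{\text{\tiny LSGAN}}(G) \;=\; \int_{\mathcal{X}} \frac{\bigl((b-c)\,p_\text{d}(\bm{x}) + (a-c)\,p_g(\bm{x})\bigr)^2}{p_\text{d}(\bm{x}) + p_g(\bm{x})}\,\textrm{d}\bm{x}.
\end{equation*}

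Now I would invoke the hypotheses: $b-c=1$ directly, and $b-a=2$ combined with $b-c=1$ forces $a-c=-1$. The numerator therefore collapses to $(p_\text{d}-p_g)^2 = \bigl((p_\text{d}+p_g) - 2p_g\bigr)^2$, so the integral becomes $\int \bigl((p_\text{d}+p_g) - 2p_g\bigr)^2/(p_\text{d}+p_g)\,\textrm{d}\bm{x}$, which is exactly the Pearson $\chi^2$ divergence between $p_\text{d}+p_g$ and $2p_g$ (with the factor $\tfrac{1}{2}$ absorbed on the left). Minimizing $V_{\text{\tiny LSGAN}}(G)$ is thus equivalent to minimizing this divergence.

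There is no genuine obstacle here -- the entire derivation is a short algebraic manipulation built on top of Proposition~\ref{pro:optimal_d}. The only conceptual point worth making explicit is the motivation for the two constraints: $b-c=1$ fixes the coefficient of $p_\text{d}$ in the numerator, and $b-a=2$ is precisely what is required to make the coefficient of $p_g$ equal in magnitude but opposite in sign, forcing the numerator to be a perfect square of a density \emph{difference} of the form $p_\text{d}-p_g$. Any triple $(a,b,c)$ satisfying both relations produces the same divergence (e.g., $a=-1$, $b=1$, $c=0$), so the statement really picks out a one-parameter family of equivalent LSGAN losses.
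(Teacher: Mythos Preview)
Your proposal is correct and follows essentially the same route as the paper: substitute $D^*$ from Proposition~\ref{pro:optimal_d} into the extended generator loss, combine the two expectations into a single integral against $p_\text{d}+p_g$, and simplify the numerator using the constraints to expose the Pearson $\chi^2$ form. The only cosmetic difference is that the paper first rewrites the numerator as $(b-c)(p_\text{d}+p_g)-(b-a)p_g$ before plugging in $b-c=1$ and $b-a=2$, whereas you compute $a-c=-1$ directly and obtain $(p_\text{d}-p_g)^2$; both paths are equivalent.
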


\begin{proof}
We can reformulate $V_{\text{\tiny LSGAN}}(G)$ in Eq. \eqref{eq:general_lsgan} by using Proposition \ref{pro:optimal_d}:

\small
\begin{equation}
\label{eq:lsgan_divergence}
\begin{split}
2C(G) &= \mathbb{E}_{\bm{x} \sim p_{\text{d}}}\bigl[(D^*(\bm{x})-c)^2\bigr]+\mathbb{E}_{\bm{z} \sim p_{z}}\bigl[(D^*(G(\bm{z}))-c)^2\bigr] \\
&=\mathbb{E}_{\bm{x} \sim p_{\text{d}}}\bigl[(D^*(\bm{x})-c)^2\bigr]+\mathbb{E}_{\bm{x} \sim p_{g}}\bigl[(D^*(\bm{x})-c)^2\bigr] \\
&=\mathbb{E}_{\bm{x} \sim p_{\text{d}}}
\left[
\bigl(\frac{bp_\text{d}(\bm{x})+ap_g(\bm{x})}{p_\text{d}(\bm{x})+p_g(\bm{x})}-c\bigr)^2
\right] \\
&+\mathbb{E}_{\bm{x} \sim p_{g}}
\left[
\bigl(\frac{bp_\text{d}(\bm{x})+ap_g(\bm{x})}{p_\text{d}(\bm{x})+p_g(\bm{x})}-c\bigr)^2
\right] \\
&=\int_{\mathcal{X}}p_\text{d}(\bm{x}) \bigl(\frac{(b-c)p_\text{d}(\bm{x})+(a-c)p_g(\bm{x})}{p_\text{d}(\bm{x})+p_g(\bm{x})}\bigr)^2 \textrm{d}\bm{x} \\
&+ \int_{\mathcal{X}}p_g(\bm{x}) \bigl(\frac{(b-c)p_\text{d}(\bm{x})+(a-c)p_g(\bm{x})}{p_\text{d}(\bm{x})+p_g(\bm{x})}\bigr)^2 \textrm{d}\bm{x} \\
&=\int_{\mathcal{X}} \frac{\bigl((b-c)p_\text{d}(\bm{x})+(a-c)p_g(\bm{x})\bigr)^2}{p_\text{d}(\bm{x})+p_g(\bm{x})} \textrm{d}\bm{x} \\
&=\int_{\mathcal{X}} \frac{\bigl((b-c)(p_\text{d}(\bm{x})+p_g(\bm{x}))-(b-a)p_g(\bm{x})\bigr)^2}{p_\text{d}(\bm{x})+p_g(\bm{x})} \textrm{d}\bm{x}.
\end{split}
\end{equation}
\normalsize 
If we set $b-c=1$ and $b-a=2$, then
\begin{equation}
\label{eq:lsgan_divergence_final}
\begin{split}
2C(G)&=\int_{\mathcal{X}} \frac{\bigl(2p_g(\bm{x})-(p_\text{d}(\bm{x})+p_g(\bm{x}))\bigr)^2}{p_\text{d}(\bm{x})+p_g(\bm{x})} \textrm{d}\bm{x} \\
&=\chi^2_\text{Pearson}(p_\text{d}+p_g\|2p_g),
\end{split}
\end{equation}
where $\chi^2_\text{Pearson}$ is the Pearson $\chi^2$ divergence. Thus minimizing Eq. \eqref{eq:general_lsgan} yields minimizing the Pearson $\chi^2$ divergence between $p_\text{d}+p_g$ and $2p_g$ if $a$, $b$, and $c$ satisfy the conditions of $b-c=1$ and $b-a=2$. 

\end{proof}

\begin{figure*}[t]
\centering
\begin{tabular}{cc}
 \includegraphics[width=0.45\textwidth]{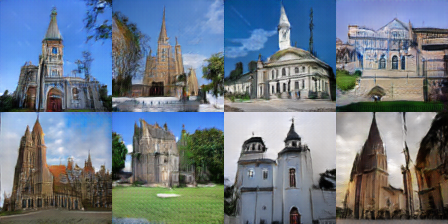}
&
 \includegraphics[width=0.45\textwidth]{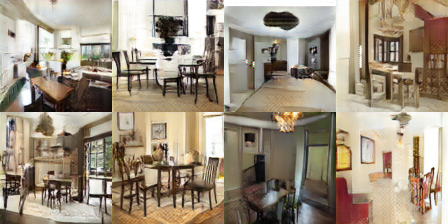}
\\
(a) \small Church outdoor.
&
(b) \small Dining room.
\\
 \includegraphics[width=0.45\textwidth]{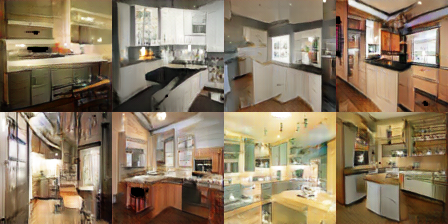}
&
 \includegraphics[width=0.45\textwidth]{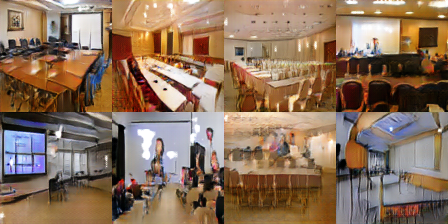}
\\
(c) \small Kitchen.
&
(d) \small Conference room.
\end{tabular}
\caption{
Generated images on different scene datasets.
}
\label{fig:scene}
\end{figure*}

\subsubsection{Properties of $\chi^2$ Divergence}
\label{sec:benefit_chi}
As Eq. \eqref{eq:gan_js} shows, the original GAN has been proven to optimize the JS divergence. Furthermore, Husz{\'a}r ~\cite{Ferenc2015} pointed out that Eq. \eqref{eq:gan_js} can be viewed as an interpolation between $\text{KL}(p_g \| p_d)$ and $\text{KL}(p_d \| p_g)$:

\begin{equation}
\label{eq:js_kl}
\begin{split}
\text{JS}_{\pi}(p_d\|p_g) &=(1-\pi) \text{KL}(p_d\|\pi p_d+(1-\pi)p_g) \\
&+ \pi \text{KL}(p_g\|\pi p_d+(1-\pi)p_g),
\end{split}
\end{equation}
where Eq. \eqref{eq:gan_js} corresponds to $\pi=0.5$. They also found that optimizing Eq. \eqref{eq:gan_js} tends to perform similarly to $\text{KL}(p_g\|p_d)$. $\text{KL}(p_g\|p_d)$ is widely used in variational inference due to the convenient evidence lower bound ~\cite{Bishop2006}. However, optimizing $\text{KL}(p_g\|p_d)$ has the problem of mode-seeking behavior or under-dispersed approximations ~\cite{Bishop2006,Dieng2017,Ferenc2015}. This problem also appears in GANs learning, which is known as the mode collapse problem. The definition of $\text{KL}(p_g\|p_d)$ is given below:

\begin{equation}
\label{eq:kl}
\begin{split}
\text{KL}(p_g\|p_d) = -\int_{\mathcal{X}} p_g(\bm{x})\ln\left(\frac{p_d(\bm{x})}{p_g(\bm{x})}\right)\text{d}\bm{x}. 
\end{split}
\end{equation}
The mode-seeking behavior of $\text{KL}(p_g\|p_d)$ can be understood by noting that $p_g$ will be close to zero where $p_d$ is near zero, because $\text{KL}(p_g\|p_d)$ will be infinite if $p_d=0$ and $p_g>0$. This is called the zero-forcing property ~\cite{Bishop2006}.

Recently, $\chi^2$ divergence has drawn researchers' attention in variational inference since $\chi^2$ divergence is able to produce over-dispersed approximations ~\cite{Dieng2017}. For the objective function in Eq. \eqref{eq:lsgan_divergence_final}, it will become infinite if $p_d+p_g=0$ and $p_g-p_d>0$, which will not happen since $p_g\geq0$ and $p_d\geq0$. Thus $\chi^2_\text{Pearson}(p_\text{d}+p_g\|2p_g)$ has no zero-forcing property. This makes LSGANs less mode-seeking and alleviates the mode collapse problem.

\subsection{Parameters Selection}
\label{sec:para}
One method to determine the values of $a$, $b$, and $c$ in Eq. \eqref{eq:lsgan} is to satisfy the conditions of $b-c=1$ and $b-a=2$, such that minimizing Eq. \eqref{eq:lsgan} yields minimizing the Pearson $\chi^2$ divergence between $p_\text{d}+p_g$ and $2p_g$. For example, by setting $a=-1$, $b=1$, and $c=0$, we get the following objective functions:
\begin{equation}
\label{eq:lsgan_peason}
\begin{split}
\min_D V_{\text{\tiny LSGAN}}(D) = &\frac{1}{2}\mathbb{E}_{\bm{x} \sim p_{\text{data}}(\bm{x})}\bigl[(D(\bm{x})-1)^2\bigr] \\
+ &\frac{1}{2}\mathbb{E}_{\bm{z} \sim p_{\bm{z}}(\bm{z})}\bigl[(D(G(\bm{z}))+1)^2\bigr] \\
\min_G V_{\text{\tiny LSGAN}}(G) = &\frac{1}{2}\mathbb{E}_{\bm{z} \sim p_{\bm{z}}(\bm{z})}\bigl[(D(G(\bm{z})))^2\bigr].
\end{split}
\end{equation}

Another method is to make $G$ generate samples as real as possible by setting $c=b$, corresponding to the traditional way of using least squares for classification. For example, by using the $0$-$1$ binary coding scheme, we get the following objective functions:
\begin{equation}
\label{eq:lsgan_01}
\begin{split}
\min_D V_{\text{\tiny LSGAN}}(D) = &\frac{1}{2}\mathbb{E}_{\bm{x} \sim p_{\text{data}}(\bm{x})}\bigl[(D(\bm{x})-1)^2\bigr] \\
+ &\frac{1}{2}\mathbb{E}_{\bm{z} \sim p_{\bm{z}}(\bm{z})}\bigl[(D(G(\bm{z})))^2\bigr] \\
\min_G V_{\text{\tiny LSGAN}}(G) = &\frac{1}{2}\mathbb{E}_{\bm{z} \sim p_{\bm{z}}(\bm{z})}\bigl[(D(G(\bm{z}))-1)^2\bigr].
\end{split}
\end{equation}

In practice, we find that Eq. \eqref{eq:lsgan_peason} shows better FID results and faster convergence speed than Eq. \eqref{eq:lsgan_01}, as demonstrated by experiments. For the experiments presented in our earlier conference work ~\cite{Mao2017}, we adopted Eq. \eqref{eq:lsgan_01}, but for the newly introduced experiments in this paper, Eq. \eqref{eq:lsgan_peason} is adopted.

\begin{figure*}[t]
\centering
\begin{tabular}{cccccc}  
\multicolumn{6}{c}{\includegraphics[width=0.91\textwidth]{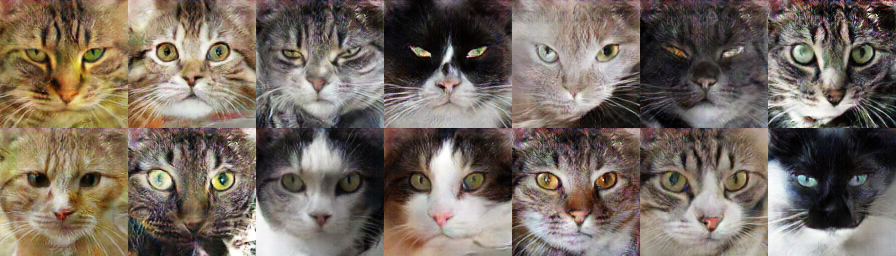}}
\\
 \multicolumn{6}{c}{(a) Generated cats ($128 \times 128$) by NS-GANs.}
\\
 \multicolumn{6}{c}{\includegraphics[width=0.91\textwidth]{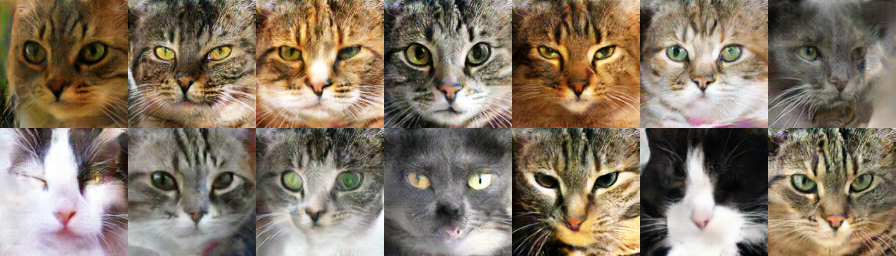}}
\\
 \multicolumn{6}{c}{(b) Generated cats ($128 \times 128$) by LSGANs.}
\\

 \includegraphics[width=0.13\textwidth]{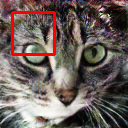} & \includegraphics[width=0.13\textwidth]{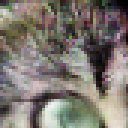} & \includegraphics[width=0.13\textwidth]{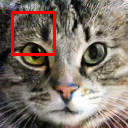} & \includegraphics[width=0.13\textwidth]{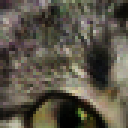} & \includegraphics[width=0.13\textwidth]{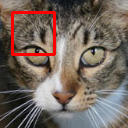} & \includegraphics[width=0.13\textwidth]{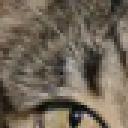} 
\\
\multicolumn{2}{c}{(c) NS-GANs} & \multicolumn{2}{c}{(d) LSGANs} & \multicolumn{2}{c}{(e) Real Sample}
\end{tabular}
\caption{
Generated images on cat datasets. (c)(d)(e): Comparison by zooming in on the details of the images. LSGANs generate cats with sharper and more exquisite hair and faces than the ones generated by NS-GANs. 
}
\label{fig:cat}
\end{figure*}

\section{Experiments}
\label{sec:experiments}
In this section, we first present some details of our implementation. Next, we present the results of the qualitative evaluation and quantitative evaluation of LSGANs. Then we evaluate the stability of LSGANs in two groups. One is to compare LSGANs with DCGANs without gradient penalty by three experiments. The other one is to compare LSGANs-GP with WGANs-GP. Note that we implement DCGANs using the non-saturating loss (NS-GANs). In the following experiments, we denote NS-GANs as the baseline method.

\subsection{Implementation Details}
The implementation of our proposed models is based on a public implementation of DCGANs\footnote{https://github.com/carpedm20/DCGAN-tensorflow} using TensorFlow ~\cite{tensorflow2015}. The learning rate is set to $0.0002$ except for LSUN-scenes whose learning rate is set to $0.001$. The mini-batch size is set to 64, and the variables are initialized from a Gaussian distribution with a mean of zero and a standard deviation of 0.02. Following DCGANs, $\beta_1$ for Adam optimizer is set to 0.5. The pixel values of all the images are scaled to [-1,1], since we use the Tanh in the generator to produce images. Our implementation is available at https://github.com/xudonmao/improved\_LSGAN.

\begin{figure*}[t]
\centering
\begin{tabular}{c}
 \includegraphics[width=0.96\textwidth]{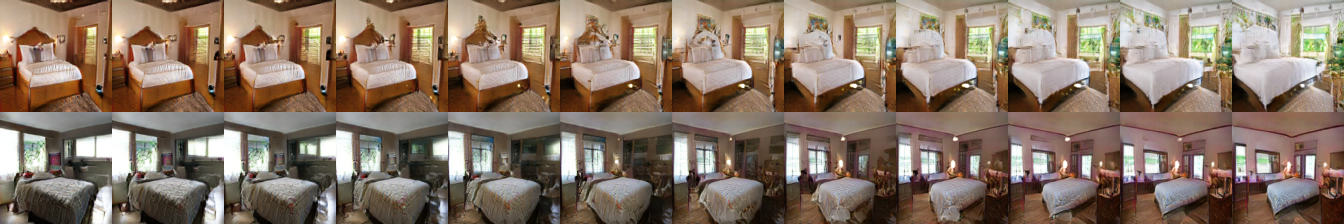}
\\
(a) Interpolation on the LSUN-bedroom dataset.
\\
 \includegraphics[width=0.96\textwidth]{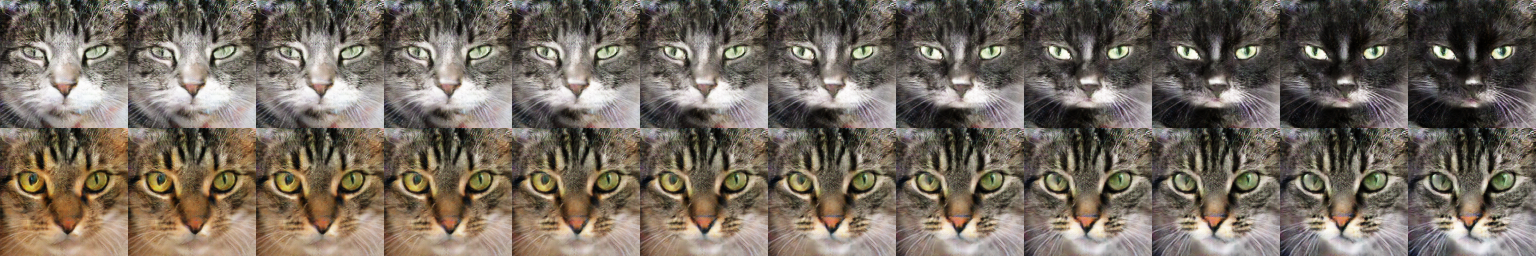}
\\
(b) Interpolation on the cats dataset.
\end{tabular}
\caption{
Interpolation result by LSGANs. The generated images show smooth transitions, which indicates that LSGANs have learned semantic representations in the latent space.
}
\label{fig:interpolation}
\end{figure*}

\subsection{Image Quality}
\subsubsection{Qualitative Evaluation}
\vspace{2pt}
\noindent \textbf{Scenes Generation}
We train LSGANs and NS-GANs using the same network architecture on the LSUN-bedroom dataset. The network architecture is presented in Table \ref{tab:scene}. All the images are resized to the resolution of $112 \times 112$. The generated images by the two models are presented in Fig. \ref{fig:bedroom_cmp}. Compared with the images generated by NS-GANs, the texture details (e.g., the textures of beds) of the images generated by LSGANs are more exquisite, and the images generated by LSGANs look sharper. We also train LSGANs on four other scene datasets including church, dining room, kitchen, and conference room. The generated results are shown in Fig. \ref{fig:scene}.

\begin{table}[t]
\caption{The network architecture for scene generation, where CONV denotes the convolutional layer, TCONV denotes the transposed convolutional layer, FC denotes the fully-connected layer, BN denotes the batch normalization, LReLU denotes the Leaky-ReLU, and (K3,S2,O256) denotes a layer with $3 \times 3$ kernel, stride 2, and 256 output filters.}
\label{tab:scene}
\begin{tabular}{|c|c|}    
\hline
Generator & Discriminator \\ \hline
Input z & Input $112\times 112\times 3$\\ 
FC(O12544), BN, ReLU           & CONV(K5,S2,O64),      LReLU\\
TCONV(K3,S2,O256), BN, ReLU    & CONV(K5,S2,O128), BN, LReLU\\
TCONV(K3,S1,O256), BN, ReLU    & CONV(K5,S2,O256), BN, LReLU\\
TCONV(K3,S2,O256), BN, ReLU    & CONV(K5,S2,O512), BN, LReLU\\
TCONV(K3,S1,O256), BN, ReLU    & FC(O1)\\
TCONV(K3,S2,O128), BN, ReLU    & Loss\\
TCONV(K3,S2,O64), BN, ReLU    & \\
TCONV(K3,S1,O3), Tanh    & \\
\hline
\end{tabular}
\end{table}

\begin{table}[t]
\caption{The network architecture for cats generation. The meanings of the symbols can be found in Table \ref{tab:scene}.}
\label{tab:cat}
\begin{tabular}{|c|c|}    
\hline
Generator & Discriminator \\ \hline
Input z & Input $128\times 128\times 3$\\ 
FC(O32768), BN, ReLU           & CONV(K5,S2,O64),      LReLU\\
TCONV(K3,S2,O256), BN, ReLU    & CONV(K5,S2,O128), BN, LReLU\\
TCONV(K3,S2,O128), BN, ReLU    & CONV(K5,S2,O256), BN, LReLU\\
TCONV(K3,S2,O64), BN, ReLU     & CONV(K5,S2,O512), BN, LReLU\\
TCONV(K3,S2,O3), Tanh          & FC(O1)\\
                               & Loss\\
\hline
\end{tabular}
\end{table}

\vspace{2pt}
\noindent \textbf{Cats Generation}
We further evaluate LSGANs on a cat dataset ~\cite{Zhang2008}. We first use the preprocess methods in a public project\footnote{https://github.com/AlexiaJM/Deep-learning-with-cats} to get cat head images whose resolution is larger than $128 \times 128$, and then resize all the images to the resolution of $128 \times 128$. The network architecture used in this task is presented in \mbox{Table \ref{tab:cat}}. We use the following evaluation protocol for comparing the performance between LSGANs and NS-GANs. First, we train LSGANs and NS-GANs using the same architecture on the cat dataset. During training, we save a checkpoint of the model and a batch of generated images every $1000$ iterations. Second, we select the best models of LSGANs and NS-GANs by checking the quality of saved images in every $1000$ iterations. Finally, we use the selected best models to randomly generate cat images and compare the quality of generated images. The selected models of LSGANs and NS-GANs are available at https://github.com/xudonmao/improved\_LSGAN. Fig. \ref{fig:cat} shows the generated cat images of LSGANs and NS-GANs. We observe that LSGANs generate cats with sharper and more exquisite hair than the ones generated by NS-GANs. Fig. \ref{fig:cat}(c)(d) shows the details of the cat hair by zooming in the generated images. We observe that the cat hair generated by NS-GANs contains more artificial noise. By checking more generated samples using the above saved models, we also observe that the overall quality of generated images by LSGANs is better than NS-GANs.

\vspace{2pt}
\noindent \textbf{Walking in the Latent Space} We also present the interpolation results in Fig. \ref{fig:interpolation}. The result of walking in the latent space is a sign of whether a model is just memorizing the training dataset. We first randomly sample two points of the noise vector $\bm{z}$, and then interpolate the vector values between the two sampled points. The images in Fig. \ref{fig:interpolation} show smooth transitions, which indicates that LSGANs have learned semantic representations in the latent space.

\subsubsection{Quantitative Evaluation}
\label{sec:quantitative}
For the quantitative evaluation of LSGANs, we adopt Fr\'{e}chet Inception Distance (FID) ~\cite{Heusel2017} as the evaluation metric. FID measures the distance between the generated images and the real images by approximating the feature space of the inception model as a multidimensional Gaussian distribution, which has been proved to be more consistent with human judgment than inception score ~\cite{Salimans2016}. Smaller FID values mean closer distances between the generated and real images. We also conduct a human subjective study on the LSUN-bedroom dataset.

\begin{table}[t]
\renewcommand{\arraystretch}{1.1}
\caption{FID results of NS-GANs, WGANs-GP, and LSGANs on four datasets. $\text{LSGANs}_{(-110)}$ and $\text{LSGANs}_{(011)}$ refer to Eq. \eqref{eq:lsgan_peason} and Eq. \eqref{eq:lsgan_01}, respectively.}
\label{tab:fid}
\normalsize
\begin{tabular}{@{\hspace{0.0\tabcolsep}}P{2.5cm}@{\hspace{0.7\tabcolsep}}P{1.1cm}P{1.1cm}P{1.1cm}P{1.1cm}@{\hspace{1.8\tabcolsep}}}
\hline
Method & LSUN         &Cat& ImageNet &  CIFAR10 \\
\hline
NS-GANs & $28.04$     &$15.81$ & $74.15$   &$35.25$\\
WGANs-GP & $22.77$    &$29.03$  & $\textbf{62.05}$     &$40.83$\\
$\text{LSGANs}_{(011)}$ & $27.21$    &$15.46$  & $72.54$     &$36.46$\\
$\text{LSGANs}_{(-110)}$& $\textbf{21.55}$ &$\textbf{14.28}$ & $68.95$ & $\textbf{35.19}$\\
\hline
\end{tabular}
\centering
\end{table}

\begin{figure}[t]
\centering
\begin{tabular}{cc}
 \includegraphics[width=1.6in]{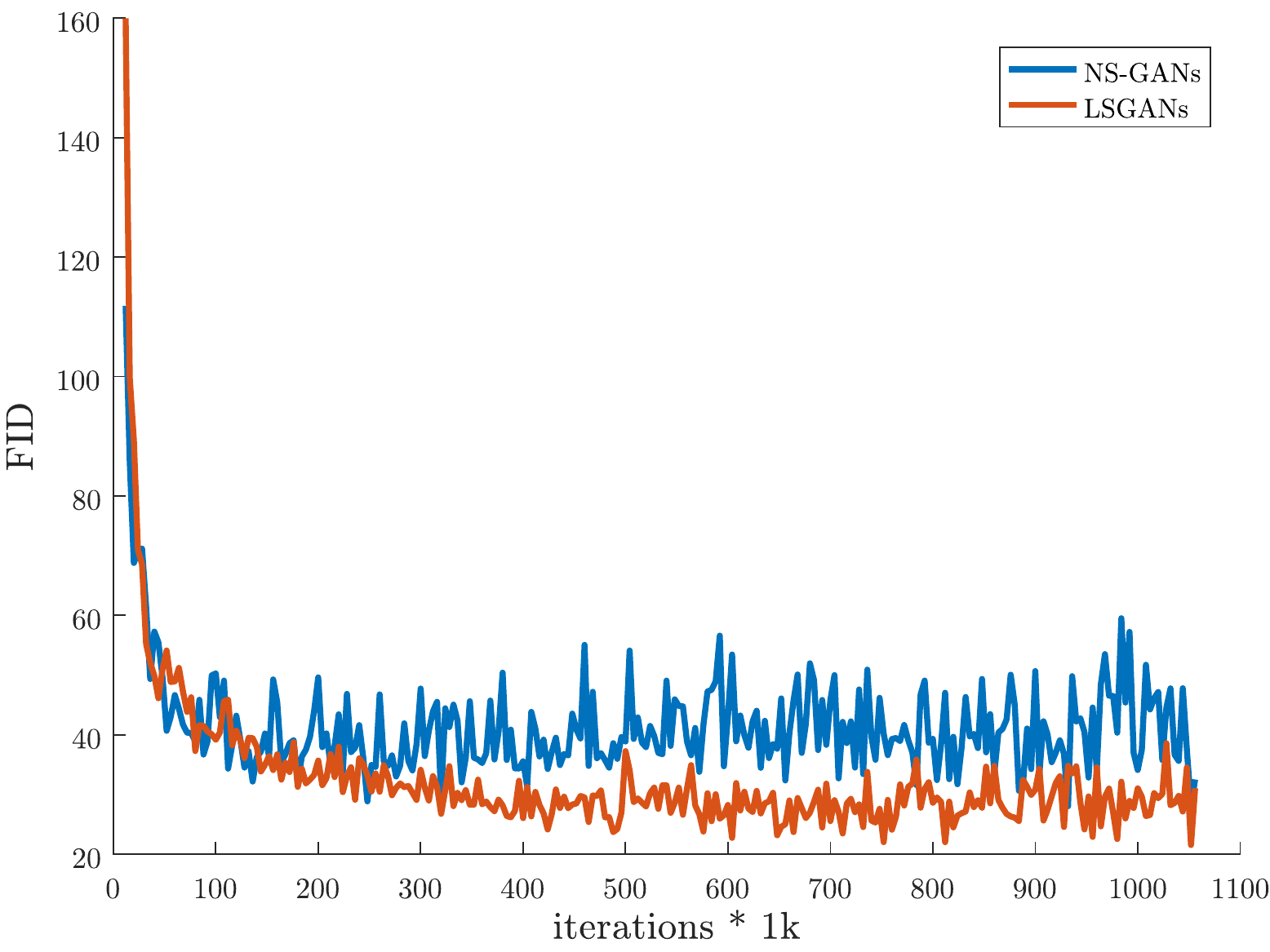}
&
 \includegraphics[width=1.6in]{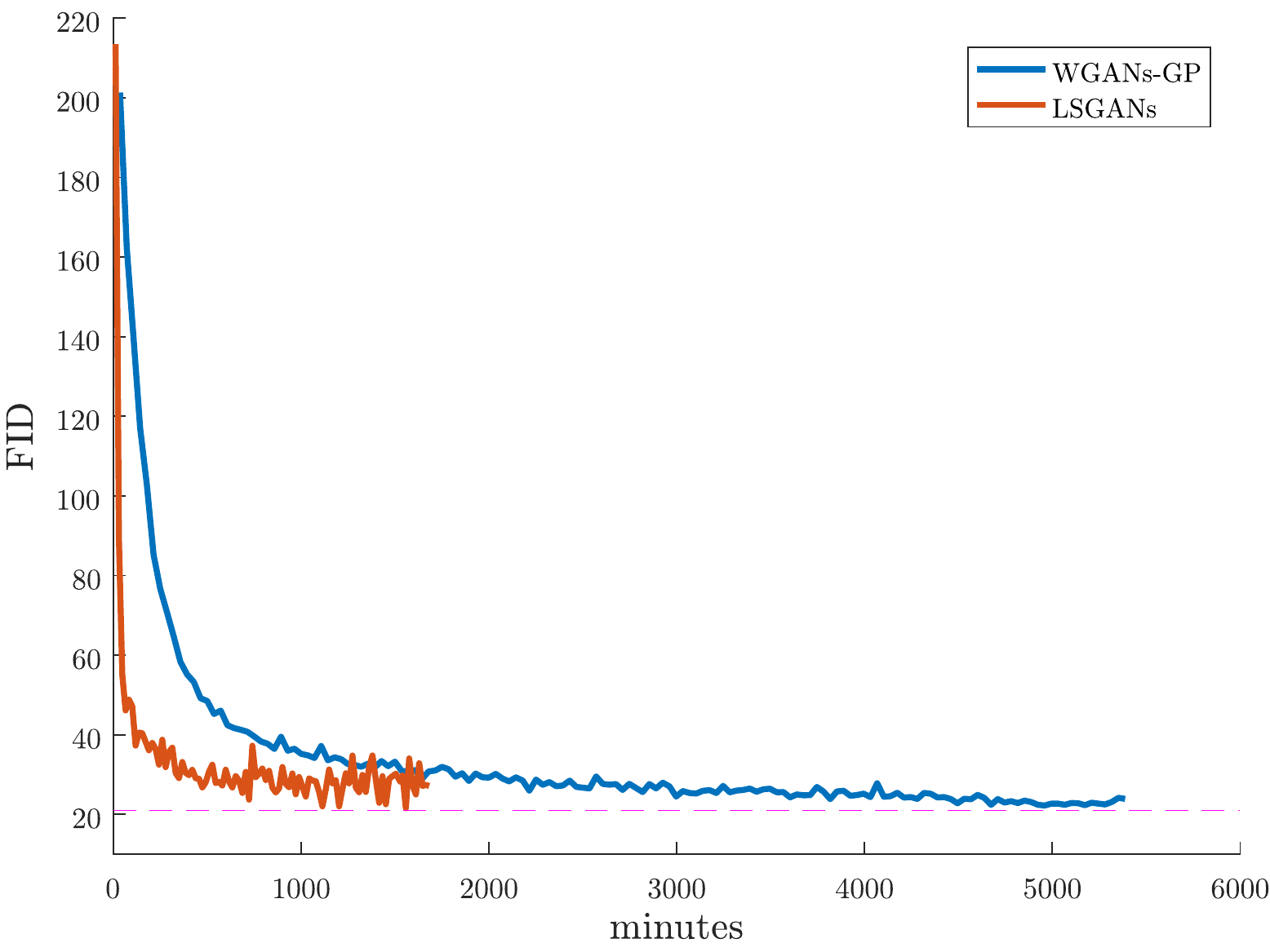}
\\
(a)
&
(b)
\end{tabular}
\caption{
(a): Comparison of FID on LSUN between NS-GANs and LSGANs during the learning process, which is aligned with iterations. (b): Comparison of FID on LSUN between WGANs-GP and LSGANs during the learning process, which is aligned with wall-clock time. 
}
\label{fig:fid}
\end{figure}

 \begin{figure*}[t]
\centering
 \includegraphics[width=0.9\textwidth]{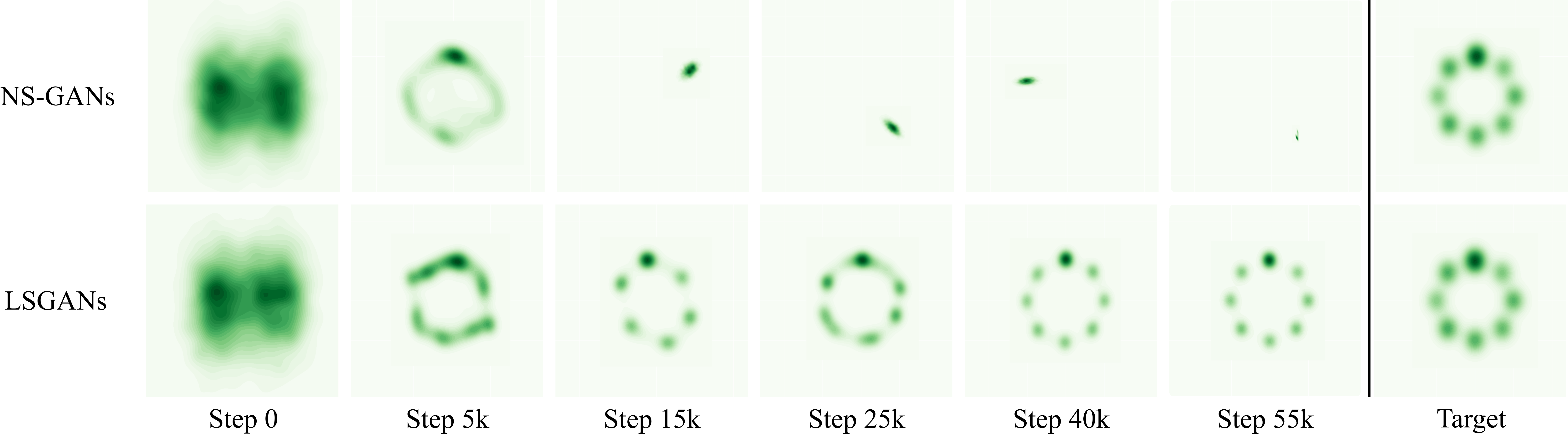}
\caption{
 Dynamic results of Gaussian kernel estimation for NS-GANs and LSGANs. The final column shows the distribution of real data. 
}
\label{fig:gaussian}
\end{figure*}

\vspace{2pt}
\noindent \textbf{Fr\'{e}chet Inception Distance} For FID, we evaluate the performances of LSGANs, NS-GANs, and WGANs-GP on several datasets including LSUN-bedroom, the cat dataset, ImageNet, and CIFAR-10. We also compare the performances of Eq. \eqref{eq:lsgan_peason} (denoted as $\text{LSGANs}_{(-110)}$) and Eq. \eqref{eq:lsgan_01} (denoted as $\text{LSGANs}_{(011)}$). For a fair comparison, all the models are trained with the same architecture proposed in DCGAN ~\cite{Radford2015} (i.e., four convolutional layers for both the discriminator and the generator), and the dimension of the noise input is set to $100$. For WGANs-GP, we adopt the official implementation for evaluation. The resolutions for LSUN, Cat, ImageNet, and CIFAR-10 are $64\times64$, $128\times128$, $64\times64$, and $32\times32$, respectively. We randomly generate $50,000$ images every 4k iterations for each model and then compute FID. The results are shown in Table \ref{tab:fid}, and we have the following four observations. First, $\text{LSGANs}_{(-110)}$ outperform NS-GANs for all the four datasets. Second, comparing with WGANs-GP, $\text{LSGANs}_{(-110)}$ perform better for three datasets, especially for the cat dataset. Third, $\text{LSGANs}_{(-110)}$ perform better than $\text{LSGANs}_{(011)}$ for all the four datasets. Fourth, the performance of $\text{LSGANs}_{(011)}$ is comparable to NS-GANs.

We also show the FID plot of the learning process in Fig. \ref{fig:fid}, where LSGANs refer to $\text{LSGANs}_{(-110)}$. Following ~\cite{Heusel2017}, the plots of NS-GANs and LSGANs are aligned by iterations, and the plots of WGANs-GP and LSGANs are aligned by wall-clock time. As Fig. \ref{fig:fid}(a) shows, NS-GANs and LSGANs show similar FID at the first 25k iterations, but LSGANs can decrease FID after 25k iterations, achieving better performance eventually. Fig. \ref{fig:fid}(b) shows that WGANs-GP and LSGANs achieve similar optimal FID eventually, but LSGANs spend much less time ($1,100$ minutes) than WGANs-GP ($4,600$ minutes) to reach a relatively optimal FID around $22$. This is due to that WGANs-GP need multiple updates for the discriminator and need additional computational time for the gradient penalty.

\vspace{2pt}
\noindent \textbf{Human Subjective Study}
To further evaluate the performance of LSGANs, we conduct a human subjective study using the generated bedroom images ($112 \times 112$) from NS-GANs and LSGANs with the same network architecture. We randomly construct image pairs, where one image is from NS-GANs and the other one is from LSGANs. We ask Amazon Mechanical Turk annotators to judge which image looks more realistic. With 4,000 votes totally, NS-GANs get 43.6\% votes and LSGANs get 56.4\% votes, i.e., an overall 12.8\% increase of votes over NS-GANs.

\subsection{Training Stability}
In this section, we evaluate the stability of our proposed LSGANs and compare with two baselines including NS-GANs and WGANs-GP. Gradient penalty has been proven to be effective for improving the stability of GANs training ~\cite{Kodali2017,Gulrajani2017}, but it also has some inevitable disadvantages such as additional computational cost and memory cost. Thus we evaluate the stability of LSGANs in two groups. One is to compare with the model without gradient penalty (i.e., NS-GANs), and the other one is to compare with the model with gradient penalty (i.e., WGANs-GP). 

\subsubsection{Evaluation without Gradient Penalty}
\label{sec:stability_wo_gp}
We first compare LSGANs with NS-GANs, both of which are without gradient penalty. Three comparison experiments are conducted: 1) learning on a Gaussian mixture distribution; 2) learning with difficult architectures; and 3) learning on datasets with small variability.

\begin{table}[t]
\renewcommand{\arraystretch}{1.3}
\caption{Experiments on Gaussian mixture distribution. We run $100$ times for each model and record how many times that a model ever generates samples around one or two modes during the training process.}
\label{tab:gaussian}
\normalsize
\begin{tabular}{c@{\hskip 0.3in}c}
\hline
Method & \begin{tabular}{@{}c@{}}The number of generating samples \\around one or two modes \end{tabular}\\
\hline
NS-GANs&$99$ / $100$\\
LSGANs (ours)&$5$ / $100$\\
\hline
\end{tabular}
\centering
\end{table}

\vspace{2pt}
\noindent \textbf{Gaussian Mixture Distribution}
Learning on a Gaussian mixture distribution to evaluate the stability is proposed by Metz \etal ~\cite{Metz2016}. If the model suffers from the mode collapse problem, it will generate samples only around one or two modes. We train NS-GANs and LSGANs with the same network architecture on a 2D mixture of eight Gaussian mixture distribution, where both the generator and the discriminator contain three fully-connected layers. Fig. \ref{fig:gaussian} shows the dynamic results of Gaussian kernel density estimation. We can see that NS-GANs suffer from mode collapse starting at $15$k iterations. They only generate samples around a single valid mode of the data distribution. But LSGANs can learn the Gaussian mixture distribution successfully. We also try different architectures (four or five fully-connected layers) and different values of the hyper-parameters (the learning rate and the dimension of the noise vector). The results also show that NS-GANs tend to generate samples around one or two modes, while LSGANs are less prone to this problem.

To further verify the robustness of the above observation, we run $100$ times for each model and record how many times that a model suffers from the mode collapse problem. For each experiment, we save the density estimation every $5$k iterations and observe whether a model generates samples only around one or two modes in each saved estimation. The results show that NS-GANs appear to generate one or two modes 99 times out of 100, while LSGANs only have 5 times, as shown in Table \ref{tab:gaussian}.

\begin{figure*}[t]
\centering
\begin{tabular}{cccc}
NS-GANs & LSGANs & NS-GANs& LSGANs \\
\Xhline{2\arrayrulewidth}
\\
 \includegraphics[width=0.22\textwidth]{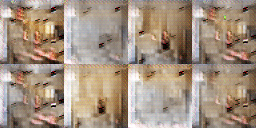}
&
 \includegraphics[width=0.22\textwidth]{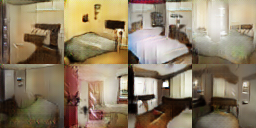}
&
 \includegraphics[width=0.22\textwidth]{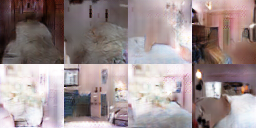}
&
 \includegraphics[width=0.22\textwidth]{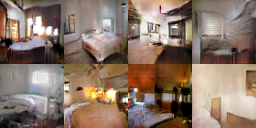}
\\
\multicolumn{2}{c}{(a) No BN in $G$ using Adam.}
&
\multicolumn{2}{c}{(b) No BN in either $G$ or $D$ using RMSProp.}
\\
 \includegraphics[width=0.22\textwidth]{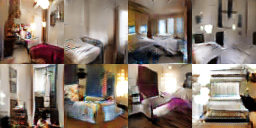}
&
 \includegraphics[width=0.22\textwidth]{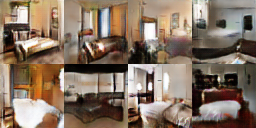}
&
 \includegraphics[width=0.22\textwidth]{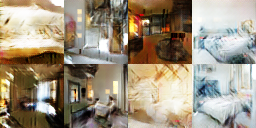}
&
 \includegraphics[width=0.22\textwidth]{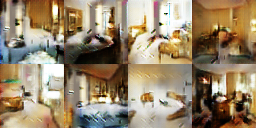}
\\
\multicolumn{2}{c}{(c) No BN in $G$ using RMSProp.}
&
\multicolumn{2}{c}{(d) No BN in either $G$ or $D$ using Adam.}

\end{tabular}
\caption{
Comparison experiments between NS-GANs and LSGANs by excluding batch normalization (BN).
}
\label{fig:no_BN}
\end{figure*}

\begin{figure*}[t]
\centering
\begin{tabular}{c@{\hskip 0.5in}c@{\hskip 0.5in}c}
Real Samples & NS-GANs & LSGANs \\
\Xhline{2\arrayrulewidth}
\\
 \includegraphics[width=0.22\textwidth]{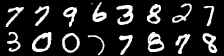}
&
 \includegraphics[width=0.22\textwidth]{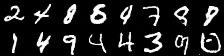}
&
 \includegraphics[width=0.22\textwidth]{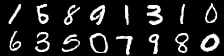}
\\
\multicolumn{3}{c}{(a) Training on MNIST.}
\\
 \includegraphics[width=0.22\textwidth]{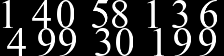}
&
 \includegraphics[width=0.22\textwidth]{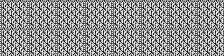}
&
 \includegraphics[width=0.22\textwidth]{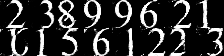}
\\
\multicolumn{3}{c}{(b) Training on a synthetic digit dataset with random horizontal shift.}
\\
 \includegraphics[width=0.22\textwidth]{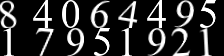}
&
 \includegraphics[width=0.22\textwidth]{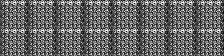}
&
 \includegraphics[width=0.22\textwidth]{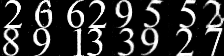}
\\
\multicolumn{3}{c}{(c) Training on a synthetic digit dataset with random horizontal shift and rotation.}
\end{tabular}
\caption{
Evaluation on datasets with small variability. All the tasks are conducted using the same network architecture as shown in Table \ref{tab:small_variance}. DCGANs succeed in learning on MNIST but fail on the two synthetic digit datasets with small variability, while LSGANs succeed in learning on all the three datasets.
}
\label{fig:small_variance}
\end{figure*}

\vspace{2pt}
\noindent \textbf{Difficult Architectures}
Another experiment is to train GANs with difficult architectures, which is proposed in ~\cite{Arjovsky2017}. The model will generate very similar images if it suffers from mode collapse problem. The network architecture used in this task is similar to the one in Table \ref{tab:cat} except for the image resolution. Based on this network architecture, two architectures are designed to compare the stability. The first one is to exclude the batch normalization in the generator ($\text{BN}_G$ for short), and the second one is to exclude the batch normalization in both the generator and discriminator ($\text{BN}_{GD}$ for short). As pointed out in ~\cite{Arjovsky2017}, the selection of optimizer is critical to the model performance. Thus we evaluate the two architectures with two optimizers, Adam~\cite{Kingma2014} and RMSProp~\cite{Tieleman2012}. In summary, we have the following four training settings: (1) $\text{BN}_G$ with Adam, (2) $\text{BN}_G$ with RMSProp, (3) $\text{BN}_{GD}$ with Adam, and (4) $\text{BN}_{GD}$ with RMSProp. We train the above models on the LSUN-bedroom dataset using NS-GANs and LSGANs separately. The results are shown in Fig. \ref{fig:no_BN}, and we make the following three major observations. First, for $\text{BN}_G$ with Adam, there is a chance for LSGANs to generate relatively good quality images. We test $10$ times, and $5$ of those succeed to generate relatively good quality images. For NS-GANs, however, we never observe successful learning, suffering from a severe degree of mode collapse. Second, for $\text{BN}_{GD}$ with RMSProp, as Fig. \ref{fig:no_BN} shows, LSGANs generate higher quality images than NS-GANs which have a slight degree of mode collapse. Third, LSGANs and NS-GANs have similar performance for $\text{BN}_G$ with RMSProp and $\text{BN}_{GD}$ with Adam. Specifically, for $\text{BN}_G$ with RMSProp, both LSGANs and NS-GANs can generate relatively good images. For $\text{BN}_{GD}$ with Adam, both have a slight degree of mode collapse.

\begin{table}[t]
\caption{The network architecture for stability evaluation on datasets with small variability. The meanings of the symbols can be found in Table \ref{tab:scene}.}
\label{tab:small_variance}
\begin{tabular}{|c|c|}    
\hline
Generator & Discriminator \\ \hline
Input z & Input $28\times 28\times 1$\\ 
FC(O8192), BN, ReLU            & CONV(K5,S2,O20),     LReLU\\
TCONV(K3,S2,O256), BN, ReLU    & CONV(K5,S2,O50), BN, LReLU\\
TCONV(K3,S2,O128), BN, ReLU    & FC(O500), BN, LReLU\\
TCONV(K3,S2,O1), Tanh          & FC(O1)\\
                               & Loss\\
\hline
\end{tabular}
\end{table}

\begin{figure*}[t]
\centering
\begin{tabular}{cccc}
WGANs-GP & LSGANs-GP & WGANs-GP & LSGANs-GP \\
\Xhline{2\arrayrulewidth}
\\
 \includegraphics[width=0.22\textwidth]{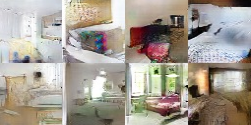}
&
 \includegraphics[width=0.22\textwidth]{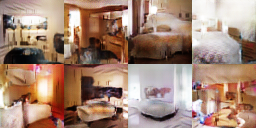}
&
 \includegraphics[width=0.22\textwidth]{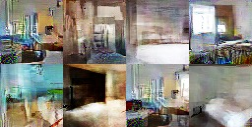}
&
 \includegraphics[width=0.22\textwidth]{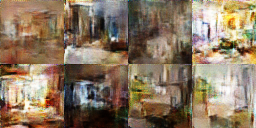}
\\
\multicolumn{2}{c}{(a) G: No BN and a constant number of filters, D: DCGAN.}
&
\multicolumn{2}{c}{(b) G: 4-layer 512-dim ReLU MLP, D: DCGAN.}
\\
 \includegraphics[width=0.22\textwidth]{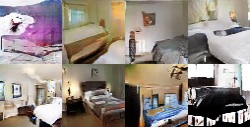}
&
 \includegraphics[width=0.22\textwidth]{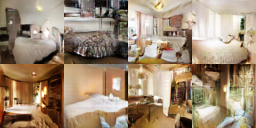}
&
 \includegraphics[width=0.22\textwidth]{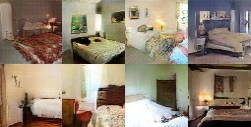}
&
 \includegraphics[width=0.22\textwidth]{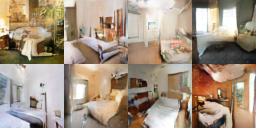}
\\
\multicolumn{2}{c}{(c) No normalization in either G or D.}
&
\multicolumn{2}{c}{(d) Gated multiplicative nonlinearities everywhere in G and D.}
\\
 \includegraphics[width=0.22\textwidth]{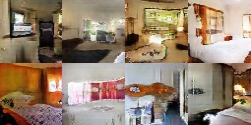}
&
 \includegraphics[width=0.22\textwidth]{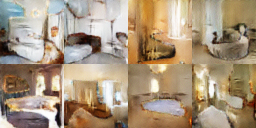}
&
 \includegraphics[width=0.22\textwidth]{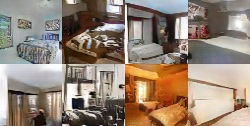}
&
 \includegraphics[width=0.22\textwidth]{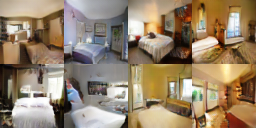}
\\
\multicolumn{2}{c}{(e) Tanh nonlinearities everywhere in G and D.}
&
\multicolumn{2}{c}{(f) 101-layer ResNet G and D.}
\end{tabular}
\caption{
Comparison experiments between WGANs-GP and LSGANs-GP using difficult architectures, where the images generated by WGANs-GP are duplicated from ~\cite{Gulrajani2017}. LSGANs-GP succeed for all the architectures.
}
\label{fig:cmp_wgan}
\end{figure*}

\begin{figure*}[t]
\centering
 \includegraphics[width=0.9\textwidth]{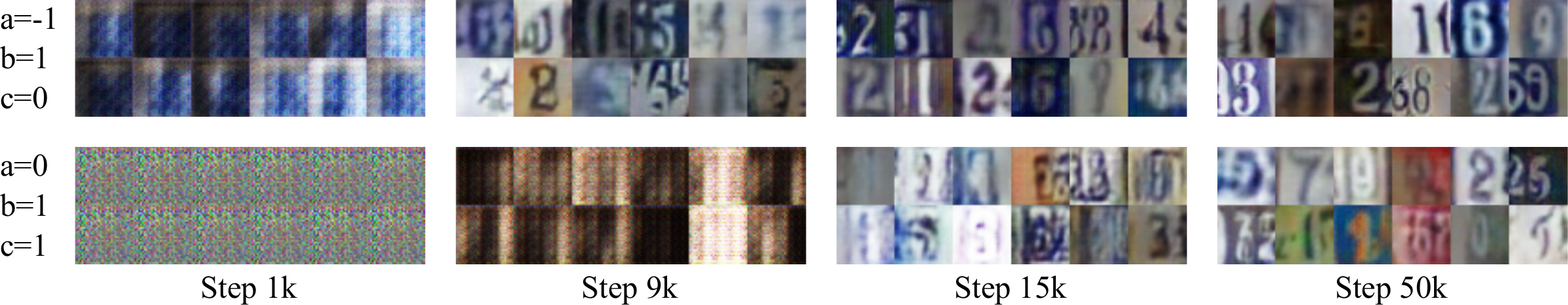}
\caption{
 Dynamic results of the two parameter schemes on the SVHN dataset. The first row corresponds to Eq. \eqref{eq:lsgan_peason}, and the second row corresponds to Eq. \eqref{eq:lsgan_01}.
}
\label{fig:svhn}
\end{figure*}

\vspace{2pt}
\noindent \textbf{Datasets with small variability}
Using difficult architectures is an effective way to evaluate the stability of GANs ~\cite{Arjovsky2017}. However, in practice, it is natural to select a stable architecture for a given task. The difficulty of a practical task is the task itself. Inspired by this motivation, we propose to use difficult datasets but stable architectures to evaluate the stability of GANs. We find that the datasets with small variability are difficult for GANs to learn, since the discriminator can distinguish the real samples very easily for the datasets with small variability. Specifically, we construct the datasets by rendering $28\times 28$ digits using the Times-New-Roman font. Two datasets are created\footnote{Available at https://github.com/xudonmao/improved\_LSGAN}: 1) one is applied with random horizontal shift; and 2) the other one is applied with random horizontal shift and random rotation from $0$ to $10$ degrees. Each category contains one thousand samples for both datasets. Note that the second dataset is with larger variability than the first one. Examples of the two synthetic datasets are shown in the first column of Fig. \ref{fig:small_variance}. We adopt a stable architecture for digits generation, following the suggestions in ~\cite{Radford2015}, where the discriminator is similar to LeNet, and the generator contains three transposed convolutional layers. The detail of the network architecture is presented in Table \ref{tab:small_variance}. We train NS-GANs and LSGANs on the above two datasets, and the generated images are shown in Fig. \ref{fig:small_variance}, along with the results on MNIST. We have two major observations. First, NS-GANs succeed in learning on MNIST but fail on the two synthetic digit datasets, while LSGANs succeed in learning on all the three datasets. Second, LSGANs generate higher quality images on the second dataset than the first one. This implies that increasing the variability of the dataset can improve the generated image quality and relieve the mode collapse problem. Based on this observation, applying data augmentation such as shifting, cropping, and rotation is an effective way of improving GANs learning.

\subsubsection{Evaluation with Gradient Penalty}
\label{sec:stability_with_gp}
Gradient penalty has been proven to be effective in improving the stability of GAN training ~\cite{Kodali2017,Gulrajani2017}. To compare with WGANs-GP, which is the state-of-the-art GAN model in stability, we adopt the gradient penalty in ~\cite{Kodali2017} for LSGANs and set the hyper-parameters $c$ and $\lambda$ to $30$ and $150$, respectively. For this experiment, our implementation is based on the official implementation of WGANs-GP. We follow the evaluation method in WGANs-GP: to train with six difficult architectures including 1) no normalization and a constant number of filters in the generator; 2) 4-layer 512-dimension ReLU MLP generator; 3) no normalization in either the generator or discriminator; 4) gated multiplicative nonlinearities in both the generator and discriminator; 5) tanh nonlinearities in both the generator and discriminator; and 6) 101-layer ResNet for both the generator and discriminator. The results are presented in Fig. \ref{fig:cmp_wgan}, where the generated images by WGANs-GP are duplicated from ~\cite{Gulrajani2017}. We have the following two major observations. First, like WGANs-GP, LSGANs-GP also succeed in training for each architecture, including 101-layer ResNet. Second, LSGANs-GP with 101-layer ResNet generate higher quality images than the other five architectures.

\subsection{Comparison of Two Parameter Schemes}

As stated in Section \ref{sec:quantitative}, $\text{LSGANs}_{(-110)}$ perform better than $\text{LSGANs}_{(011)}$ for the FID-based experiment. In this experiment, we show another comparison between the two parameter schemes. We train $\text{LSGANs}_{(-110)}$ and $\text{LSGANs}_{(011)}$ on SVHN ~\cite{Netzer2011} dataset using the same network architecture. Fig. \ref{fig:svhn} shows the dynamic results of the two schemes. We can observe that $\text{LSGANs}_{(-110)}$ shows faster convergence speed than $\text{LSGANs}_{(011)}$. We also evaluate the two schemes on the LSUN-bedroom and cat dataset, and similar results are observed.

\subsection{Suggestions in Practice}
Based on the above experiments, we have the following suggestions in practice. First, we suggest using $\text{LSGANs}_{(-110)}$ without gradient penalty if it works, because using gradient penalty will introduce additional computational cost and memory cost. Second, we observe that the quality of generated images by LSGANs may shift between good and bad during the training process, which is also indicated in Fig. \ref{fig:fid}. Thus we suggest to keep a record of generated images at every thousand or hundred iterations and select the model manually by checking the image quality. Third, if LSGANs without gradient penalty fail, we suggest using LSGANs-GP and set the hyper-parameters according to the suggestions in literature ~\cite{Kodali2017}. In our experiments, we find that the hyper-parameter setting, $c=30$ and $\lambda=150$, works for all the tasks.

\section{Conclusions and Future Work}
\label{sec:conclusion}
In this paper, we have proposed the Least Squares Generative Adversarial Networks (LSGANs) to overcome the vanishing gradients problem during the learning process. The experimental results show that LSGANs generate higher quality images than regular GANs. Based on the quantitative experiments, we find that the derived objective function that yields minimizing the Pearson $\chi^2$ divergence performs better than the classical one of using least squares for classification. We also conducted three comparison experiments for evaluating the stability, and the results demonstrate that LSGANs perform more stably than regular GANs. We further compare the stability between LSGANs-GP and WGANs-GP, and LSGANs-GP show comparable stability to WGANs-GP. For the future work, instead of pulling the generated samples toward the decision boundary, designing a method to pull the generated samples toward real data directly is worth further investigation.

\bibliographystyle{IEEEtran}





\end{document}